\def\blfootnote{\xdef\@thefnmark{}\@footnotetext}
\newtheorem{theorem}{Theorem}[]
\newtheorem{lemma}{Lemma}[section]
\newtheorem{corollary}[lemma]{Corollary}
\theoremstyle{definition}
\newtheorem{definition}{Definition}[]
\newcommand{\argmin}{\mathop{\rm argmin}}
\newcommand{\argmax}{\mathop{\rm argmax}}
\newcommand{\sigmax}{\sigma_{\max} }
\newcommand{\sigmin}{\sigma_{\min} }
\newcommand{\ssigmin}{\bar{\sigma}_{\min}}
\newcommand{\cond}{\mathbf{\kappa}}
\newcommand{\order}{\mathcal{O}}
\newcommand{\reals}{\mathbf{R}}
\newcommand{\naturals}{\mathbf{N}}
\newcommand{\calC}{\mathcal{C}}
\newcommand{\calA}{\mathcal{A}}
\newcommand{\calP}{\mathcal{P}}
\newcommand{\uniform}{\mathbf{Unif}}
\newcommand{\bern}{\mathbf{Bern}}
\newcommand{\normal}{\mathbf{Normal}}
\newcommand{\exponential}{\mathbf{Exponential}}
\newcommand{\convsum}{\sum_{\ell=1}^{L} W_\ell H S_{\ell-1}}
\newcommand{\permi}{{P(i)}}
\newcommand{\permj}{{P(j)}}
\newcommand{\fullrank}{\textbf{[Full Rank]}}
\newcommand{\separable}{\textbf{[Separable]}}
\newcommand{\noisebound}{\textbf{[Bounded noise]}}
\newcommand{\distbound}{\textbf{[Sequentially unique]}}
\definecolor{brightpink}{rgb}{1.0, 0.0, 0.5}
\newcommand{\vtwo}[1]{{\color{black}#1}}
\newcommand{\vtwominor}[1]{{\color{black}#1}}
\renewcommand{\maketag@@@}[1]{\hbox{\m@th\normalsize\normalfont#1}}%
\begin{document}

\title{A Provably Correct and Robust Algorithm for \\ Convolutive Nonnegative Matrix Factorization\blfootnote{\textcopyright 2019 IEEE.  Personal use of this material is permitted.  Permission from IEEE must be obtained for all other uses, in any current or future media, including reprinting/republishing this material for advertising or promotional purposes, creating new collective works, for resale or redistribution to servers or lists, or reuse of any copyrighted component of this work in other works.}}

\author{Anthony~Degleris
\thanks{Department of Electrical Engineering, Stanford University, Stanford CA, USA. E-mail: degleris@stanford.edu.}
~and~Nicolas~Gillis
\thanks{Department of Mathematics and Operational Research,
Facult\'e Polytechnique, Universit\'e de Mons,
Rue de Houdain 9, 7000 Mons, Belgium. NG acknowledges the support by the Fonds de la Recherche Scientifique - FNRS and the Fonds Wetenschappelijk Onderzoek - Vlanderen (FWO) under EOS Project no O005318F-RG47, and by the European Research Council (ERC starting grant no 679515).
E-mail: nicolas.gillis@umons.ac.be.}
}

\maketitle
\begin{abstract}
    In this paper, we propose a provably correct algorithm for convolutive nonnegative matrix factorization (CNMF) under separability assumptions.
    CNMF is a convolutive variant of nonnegative matrix factorization (NMF), which functions as an NMF with additional sequential structure.
    This model is useful in a number of applications, such as audio source separation and neural sequence identification.
    While a number of heuristic algorithms have been proposed to solve CNMF, to the best of our knowledge no provably correct algorithms have been developed.
    We present an algorithm that takes advantage of the NMF model underlying CNMF and exploits existing algorithms for separable NMF to provably find
    \vtwo{the unique solution (up to permutation and scaling)} under \vtwominor{separability-like} conditions.
    Our approach guarantees the solution in low noise settings, and runs in polynomial time.
    We illustrate its effectiveness on synthetic datasets, and on a singing bird audio sequence.
\end{abstract}



\section{Introduction}
\label{sec: intro}

Nonnegative matrix factorization (NMF) is a standard unsupervised learning technique for analyzing large datasets.
Given an $N \times T$ matrix $X$, NMF seeks a $N \times K$ matrix $W \geq 0$ (where the inequality is to be interpreted elementwise) and a $K \times T$ matrix $H \geq 0$ such that $X \approx W H$ and $K \ll \min(N, T)$.
NMF has been successfully applied to a number of practical problems; these include hyperspectral unmixing, text mining, audio source separation, and image processing;
see~\cite{cichocki2009nonnegative, gillis2014why, fu2019nonnegative} and the references therein.
\vtwominor{
Let us discuss three important limitations of NMF, each of which we will address in this paper.
}

\vtwominor{A first limitation} of NMF is that it fails to capture local correlations in the data.
For example, in imaging applications a column of $X$ may represent a pixel, and adjacent columns will often correspond to pixels adjacent to one another in the image.
Neighboring pixels tend to be quite similar, especially in low contrast images.
In audio or neuroscience datasets, each column is a certain instant in time, and therefore neighboring columns are often highly correlated.
To capture these local correlations, Smaragdis~\cite{smaragdis2004non} proposed a convolutive variant of NMF, known as convolutive NMF (CNMF).
CNMF attempts to find $L$ matrices $W_1, \hdots, W_L$ of size $N \times K$ and a matrix $H$ of size $K \times T$ such that $X = \sum_{\ell=1}^L W_\ell H S_{\ell-1}$, where $S_{\tau}$ is a square matrix that shifts the columns of $H$ by $\tau$ places to the right, zero-padding the leftmost $\tau$ columns.
Explicitly, $S_{\tau}$ has ones on its $\tau$th upper diagonal and zeros elsewhere.
It is often convenient to instead define an $L \times N \times K$ tensor $W$ and denote $W_{\ell ::} = W_\ell$ the resulting $N \times K$ matrix when the first index is fixed to $\ell$.
Then another, perhaps more intuitive, definition of CNMF is to equivalently write $X = \sum_{k=1}^K W_{::k}^T * h_k$,
    where $*$ is the 2D-convolution operator defined as
    \vtwominor{
    $(A * b)_{ij} = \sum_{\tau = 1}^L A_{i \tau} b_{j - \tau} \in \reals^{N \times T}$, with
        $A \in \reals^{N \times L}$ and
        $b \in \reals^{1 \times T}$,}
    $W_{::k} \in \reals^{L \times N}$ is the resulting matrix when the third index is fixed to $k$,
    and $h_k \in \reals^{1 \times T}$ is the $k$th row of $H$.
Thus, CNMF is a factorization of $X$ into a sum of 2D\vtwominor{-}convolutions.
This characterization is visually apparent in Figure~\ref{fig: conv-graphic}.
\begin{figure}[th!]
   \centering
   \begin{subfigure}[b]{0.85\linewidth}
       \centering
       \includegraphics[width=\linewidth]{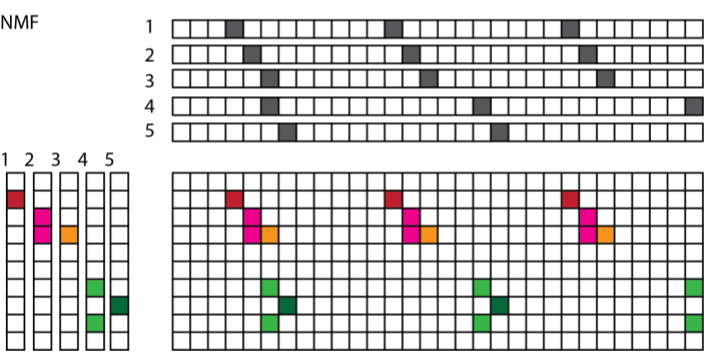}
   \end{subfigure}

   \vspace{0.75cm}

   \begin{subfigure}[b]{0.85\linewidth}
       \centering
       \includegraphics[width=\linewidth]{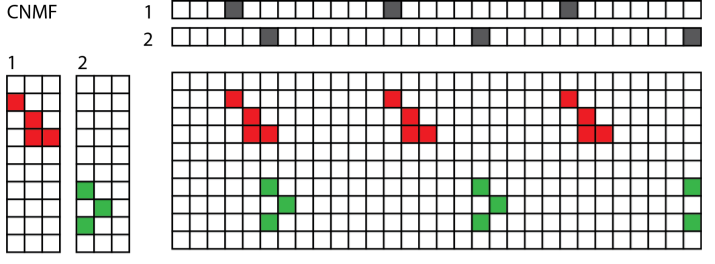}
   \end{subfigure}
   \caption{
       A visual demonstration of how NMF (top) and CNMF (bottom) attempt to reconstruct a matrix.
       NMF estimates the matrix as a sum of five outer products ($K=5$ in the NMF model), whereas CNMF reconstructs the same matrix as a sum of two convolutions ($K=2$ and $L=3$ in the CNMF model).
   }
   \label{fig: conv-graphic}
\end{figure}
Since its conception, CNMF has been found particularly useful for audio source separation~\cite{smaragdis2004non, schmidt2006nonnegative, zhou2014unsupervised} and neural sequence identification~\cite{mackevicius2019unsupervised}.
In general, CNMF will provide a more concise and interpretable factorization than NMF whenever the columns of $X$ exhibit local correlations, which is often the case when the columns represent points in time or space.


\vtwominor{A second limitation} of NMF is that
solving NMF problems is challenging in practice.
In fact, it is in general NP-hard~\cite{vavasis2009complexity}, \vtwominor{leading} researchers to rely on a number of heuristic algorithms; see, e.g.,~\cite{gillis2014why} and the references therein.
As CNMF is a generalization of NMF (the model is exactly NMF when $L=1$), it inherits problems related to computational intractability.
The first algorithm proposed was an \vtwominor{averaged} multiplicative update rule~\cite{smaragdis2004non}, which treats each pair $(W_\ell, HS_{\ell-1})$ as an NMF and updates them using an NMF multiplicative update~\cite{lee2001algorithms}, averaging the updates for $H$.
Since then, numerous other algorithms have been proposed to fit CNMF, including multiplicative updates~\cite{schmidt2006nonnegative, wang2009multiplicative, gorlow2018multiplicative, fagot2019majorization} (which are derived using  majorization minimization), projected alternating least squares~\cite{phan2012connection}, alternating nonnegative least squares (ANLS;~\cite{cnmfhals}), and hierarchical alternating least squares (HALS;~\cite{cnmfhals}).
Many of these algorithms are generalizations of algorithms for NMF and perform well on both synthetic and experimental data.
Several of them also have convergence guarantees to stationary points.
However, all the aforementioned algorithms are heuristics, in the sense that there is no guarantee
that they reach a global minimum. 

\vtwo{
A third limitation of NMF is that, in general, the NMF $WH$ of a matrix $X$ is non-unique, that is, there are other $W', H'$ such that $WH = X = W' H'$, but $W \neq W'$ and $H \neq H'$.
We therefore cannot be certain
that the recovered factors $(W,H)$ correspond to the true sources that generated the data. The same problem arises with CNMF; see Section~\ref{sec: numerical-synth} for some examples.
In practice, most researchers use additional regularization terms to promote some structure in the sought solution such as sparsity~\cite{hoyer2004non}.
Unfortunately, identifiability results for NMF are rather scarce; see the recent survey~\cite{fu2019nonnegative}.
As far as we know, the only NMF model that is both \emph{tractable} and \emph{identifiable} is separable NMF~\cite{arora2012computing}.
When trying to identify an NMF $WH = X$,
separable NMF makes the additional assumption that the columns of $W$ are contained somewhere in $X$, that is, $W = X[:,\calC]$ for some index set $\calC$.
This assumption is called the \textit{separability assumption}.
This terminology was introduced in the paper of Donoho and Stodden~\cite{donoho2004does} where it was shown that NMF has a unique solution under the condition that $W = X[:,\calC]$ \emph{and} additional sparsity conditions on $H$.
However, this NMF model dates back from the 1990's in the hyperspectral imaging literature where the separability assumption is referred to as the pure-pixel assumption; see~\cite{ma2014signal} and the references therein.
Since the paper of Arora et al.~\cite{arora2012computing}, which proved
that separable NMF model is unique, robust to noise, and computable in polynomial time,
numerous algorithms for separable NMF have been proposed~\cite{gillis2019separable}.
Moreover, this model has been used successfully in  applications such as topic modeling~\cite{arora2013practical},
community detection~\cite{panov2017consistent},
and the analysis of time-resolved Raman spectra~\cite{luce2016using}, to cite a few.
Of course, the separability assumption is rather strong and does not hold in all applications.
Nevertheless, it has also be shown to be a powerful way to initialize more sophisticated NMF models, such as minimum-volume NMF models, which are identifiable but suffer from intractability~\cite{fu2019nonnegative}.
One of the most popular and powerful separable NMF algorithm is the successive projection algorithm (SPA)~\cite{Araujo01} which is guaranteed to recover the columns of $W$, even in the presence of noise~\cite{gillis2014fast}.
SPA is a greedy algorithm that sequentially identifies the columns of $W$ using successive orthogonal projections (see Algorithm~\ref{algo: OrConSPA} and Appendix~A for more details.).
SPA is closely related to the modified Gram-Schmidt algorithm with column pivoting, and has rediscovered many times; see the discussion in~\cite{gillis2014why}.
}

\subsection{Contribution and Outline of the Paper}


In this paper, we consider conditions under which the CNMF problem can be provably solved in polynomial time, even in the presence of noise, \vtwominor{and has a unique solution (up to permutation and scaling of the rank-one a factors).}
These conditions generalize the separability assumption \vtwominor{discussed in the previous paragraph}.
We propose an algorithm that reduces CNMF to NMF with linear constraints and takes advantage of existing separability-based methods for NMF.
This algorithm provably finds the solution to CNMF in polynomial time, both for exact problems and problems with bounded noise.
In particular, we utilize \vtwominor{SPA} to estimate the columns of $W_1, \hdots, W_\ell$, then apply estimation, clustering, and sorting techniques to uncover the convolutive structure.
We later generalize our approach to use any separable NMF algorithm and show how the choice of \vtwominor{the} separable NMF algorithm affects our assumptions, run-time, and noise tolerance.
To the best of our knowledge, this is the first algorithm to provably solve the CNMF problem in either the absence or presence of noise.
\vtwominor{
Hence, under separability-like assumptions,
our approach resolved the three important limitations of NMF mentioned in the introduction.
}

The paper is organized as follows.
In Section \ref{sec: setup}, we state the CNMF problem and formally define the convolutive separability assumptions.
In Section \ref{sec: recovery}, we propose the Locate-Estimate-Cluster-Sort Algorithm (LECS) for recovery and show that LECS provably finds the solution to the CNMF problem \vtwominor{under the convolutive separability assumptions}.
In particular, Theorem \ref{thm: exact-recovery} guarantees that LECS will find the unique optimal solution (up to permutation and scaling) in the absence of noise.
Theorem~\ref{thm: noisy-recovery-abridged} generalizes this result to problems with bounded noise.
The proof of Theorem~\ref{thm: noisy-recovery-abridged} is deferred to the Appendix.

\paragraph*{Notation}
The $j$th entry of a vector $a$ is denoted by $a_j$.
For matrices $A \in \reals^{m \times n}$, we use $A[i,:] = A_{i:}$ and $A[:, j] = A_{:j}$ to denote the $i$th row of $A$ and $j$th column of $A$, respectively.
We will also use lower case letters for columns and rows, but define them explicitly first.
The entry in the $i$th row and the $j$th column of $A$ will be denoted $A[i, j] = A_{ij}$.
A nonnegative vector $a$ or a nonnegative matrix $A$ are denoted using $a \in \reals^n_+$ and $A \in \reals^{m \times n}_+$.
Define $S_\tau$ as a sqaure $T$-by-$T$ matrix with 1 on its $\tau$th upper diagonal and zeros elsewhere.
For vectors $a \in \reals^n$, the $p$-norm for $1 \leq p < \infty$ is defined as
    $\| a \|_p = \left( \sum_{j=1}^n |a_j|^p \right)^{1/p}$.
The maximum and minimum $p$-norms of a column of $A$ are denoted by
    $\| A \|_{p, col} = \max_{1 \leq j \leq n} \| A_{:j} \|_p$ and
    $\| A \|_{-p, col} = \min_{1 \leq j \leq n} \| A_{:j} \|_p$,
respectively.
We also define $\| A \|_{p, row} = \| A^T \|_{p, col}$ and $\| A \|_{-p, row} = \| A^T \|_{-p, col}$.
The Frobenius norm is denoted as
    $\| A \|_F = \left( \sum_{i, j =1}^{m, n} A_{ij}^2 \right)^{1/2}$.
We denote $\sigmax(A)$ to be the largest singular value of $A$, and $\sigmin(A)$ to be the smallest singular value of $A$.
The condition number of a matrix $A$ induced by the 2-norm is denoted as $\cond(A) = \sigmax(A) / \sigmin(A)$.
Given a matrix $A \in \reals^{m \times n}$, the diagonal matrix $D_A \in \reals^{n \times n}$ is defined by
    $(D_A)_{ii} =
    \begin{cases}
        0 & \text{ if } \| A_{:i} \|_1 = 0 \\
        \| A_{:i} \|_1^{-1} & \text{ otherwise}
    \end{cases}$.


\section{Problem Setup and the Separability Conditions}
\label{sec: setup}

Suppose there is a matrix $X \in \reals^{N \times T}_+$ generated as $X = \convsum$, where $W_\ell \in \reals^{N \times K}_+$ for $\ell=1,2,\dots,L$,
and $H \in \reals^{K \times T}_+$.
Consider $\tilde X = X + E$, where $E$ is some matrix of noise.
Given $\tilde X, K, L$, the \textit{convolutive NMF problem} (or CNMF problem) is to approximately recover $W_\ell, H$ (up to scaling or permutation).
When $E = 0$ and hence $\tilde X = X$, this problem amounts to finding an exact convolutive NMF.

\subsection{Reformulation as Constrained NMF}

Our approach will take advantage of existing literature on NMF.
In particular, we look to separable NMF algorithms and attempt to extend them to the convolutive case.
In this vein, it is useful to reformulate the convolutive NMF model as an NMF model with linear constraints of $H$.

Consider the sum $\convsum$.
Define $H_\ell = H S_{\ell-1}$ for $\ell = 1,2,\dots,L$.
Now define
\begin{align}
	\label{eq: block convsum}
	V = 	\begin{bmatrix}
				W_1 & \hdots & W_L
			\end{bmatrix} 
	\quad \text{ and } \quad
	G = 	\begin{bmatrix}
				H_1 \\ \vdots \\ H_{L}
			\end{bmatrix}. 
\end{align}
It follows that $V G = \convsum = X$.
Now we may think of the exact CNMF problem as a restriction of the exact NMF problem;
Given $X, K, L$, our goal is to find $V$ and $G$ such that $V G = X$ and $G$ is defined as in (\ref{eq: block convsum}), that is, each block $H_\ell$ is a shifted version of the first block $H_1$.
The above formulation further elucidates the relationship between NMF and CNMF.
When $L=1$, CNMF reduces to NMF.
On the other hand, any CNMF (given by $X, W_1, \hdots, W_\ell, H$) is also an NMF (given by $X, V, G$).
Therefore, \vtwominor{when the $V, G$ found from NMF have the form in (\ref{eq: block convsum}), a CNMF has also been identified.}
Given the reformulation of (\ref{eq: block convsum}), our approach is to utilize separable NMF algorithms to locate the columns of $V$ within $\tilde X$ and then estimate $G$.
Once these matrices have been identified up to permutation and scaling, we use clustering and sorting methods to identify $H$ and $W_1, \hdots, W_L$.
We name this approach the Locate-Estimate-Cluster-Sort Algorithm (LECS); see Algorithm~\ref{alg: LECS} which is described in details in Section~\ref{sec: recovery}.

\subsection{Convolutive Separability}

To introduce a notion of separability for the convolutive NMF model, we need \vtwominor{to address two relevant details that are not present in the standard NMF model. First, to handle time dependencies, we need} some notion of distance up to \vtwominor{a shift}.
The following definition formalizes this notion mathematically.


\begin{definition}[$L$-shift similarity]
    Consider two vectors $g_i, g_j \in \reals^T$.
    The $L$-shift cosine similarity $\cos_L$ between $g_i$ and $g_j$ is denoted $\cos_L(g_i, g_j)$  and given by
        \begin{align*}
        \max_{\ell=0, \hdots, L-1}
        \max\left(
        \cos\left( S_\ell^T g_i, g_j \right), \
        \cos\left( g_i, S_\ell^T g_j \right)
        \right),
        \end{align*}
    where $\cos(x, y) = (x^T y) / (\|x\|_2 \|y\|_2)$. This is exactly the \vtwominor{cosine of the minimum} angle between $g_i, g_j$ over all shifts of length $\ell < L$.
\end{definition}
The utility of the $L$-shift similarity comes from the fact that $\cos_L(x, y) = 1$ is equivalent to the statement that either $x = \alpha S_\ell^T y$ or $y = \alpha S_\ell^T x$ for some $\ell \in \{ 0, \hdots, L-1 \}$ and for some scalar $\alpha > 0$.

\vtwo{
The second detail is minor but essential.
Separable NMF algorithms like SPA require that each column of $X$ lie in the convex hull of $V$, that is, $\sum_i G[i, j] \leq 1$ for all $j$ (see Appendix~A for more details).
Nevertheless, when this does not hold we can still scale the columns of $V$ and $X$ to sum to one, and thus apply SPA.
This is not the case in the convolutive NMF model, since the magnitude of the sequence $W_{::k}$ may change over time (for example, a crescendo in an audio sequence).
Thus, we will have to allow arbitrary positive scaling of $V$ by some positive diagonal matrix $A$, and apply SPA to $VA$.
}
With these details in mind, we can now formulate a concept of separability for the CNMF problem.
We first provide a definition, then explain each condition in more details.

\begin{definition}[Convolutive separable]
    \label{def: conv-sep}
    Given a CNMF problem with inputs $\tilde X = X+E$ where $X = \convsum$, $L$ and $K$,
    we say the problem is \textit{convolutive separable} with respect to $\delta, \epsilon, A$, where $\delta > 0$, $\epsilon \geq 0$, and $A \in \reals^{KL \times KL}$ is a diagonal matrix with strictly positive diagonal entries, if the following conditions are satisfied:

	\noindent (A) \separable\
		For each $k, \ell$, the vector $W_k[:, \ell]$ appears as a scaled column of $X$.
		Explicitly, there are several equivalent ways to express this condition.

	    \noindent $\bullet$ The matrix $V$ satisfies $V A = X[:, \calC]$ for some set $\calC$.

	    \noindent $\bullet$ The matrix $G$ satisfies $G = [A\ M] \Pi$ for some $M \in \reals^{KL \times T-KL}_+$ and permutation matrix $\Pi \in \{0,1\}^{T \times T}$.

	\noindent (B) \distbound\

	        \noindent $\bullet$ For any two rows $h_i, h_j$ of $H$, the $2L$-shift similarity between them is $\cos_{2L}(h_i, h_j) \leq 1 - \delta$.

	        \noindent $\bullet$ For any row $h_i$ of $H$ and any $\ell, \tau = 1, \hdots, L-1$, we have $\cos(S_\tau^T h_i, S_\ell^T h_i) \leq 1 - \delta$ when $\tau \neq \ell$.

	\noindent (C) \fullrank\
	    The matrix $V$ is full rank, that is, $V$ has rank $KL$.

	\noindent (D)
	    \noisebound\
	    The noise matrix $E$ satisfies $\| E \|_{1, col} \leq \epsilon$.
\end{definition}

\vtwo{
Definition~\ref{def: conv-sep} guarantees a unique solution in the following sense.

\begin{theorem}[Unique Solution]
Suppose $X = \convsum$ satisfies (A), (B), and (C). Then any other CNMF given by $W_1', \hdots, W_L', H'$ that satisfies (A), (B), (C), and $X = \sum_{\ell=1}^L W_\ell' H' S_{\ell-1}$ differs at most by permutation and scaling, in the sense that $V = V' \Lambda \Pi$ for a positive diagonal matrix $\Lambda \in \reals^{T \times T}$ and a permutation matrix $\Pi \in \reals^{T \times T}$.
\end{theorem}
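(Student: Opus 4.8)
The plan is to reduce the claim entirely to the classical uniqueness of \emph{separable} NMF applied to the stacked matrix $V$, using only conditions (A) and (C); condition (B) plays no role in pinning down $V$ itself (it is what later disentangles $H$ and the individual $W_\ell$ from $V$). Writing the two factorizations in the block form of~(\ref{eq: block convsum}) gives $V G = X = V' G'$ with $V, V' \in \reals^{N \times KL}_+$ both of rank $KL$ by (C), hence of full column rank, and with $G, G' \geq 0$. The goal is to show that the columns of $V$ and $V'$ coincide as a set up to positive scaling, which is exactly the assertion $V = V' \Lambda \Pi$ for a positive diagonal $\Lambda$ and permutation $\Pi$ acting on the $KL$ columns.

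First I would normalize. Set $\bar v_i = v_i / \|v_i\|_1$ and $\bar x_j = x_j / \|x_j\|_1$ over the nonzero columns (full column rank forces every column of $V$ to be nonzero, and the columns of $X$ indexed by $\calC$ are positive multiples of columns of $V$, hence nonzero; any zero column $x_j$ forces $G[:,j]=0$ and may be discarded). Using nonnegativity of both $V$ and $G$, I would verify that $\|x_j\|_1 = \sum_i G_{ij}\|v_i\|_1$, so that $\bar x_j = \sum_i c_{ij}\, \bar v_i$ with $c_{ij}\geq 0$ and $\sum_i c_{ij}=1$. Thus every normalized data column lies in the polytope $\Delta := \mathrm{conv}(\bar v_1,\dots,\bar v_{KL})$. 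Conversely, separability (A), in the form $VA = X[:,\calC]$, says each $\bar v_i$ equals some $\bar x_{j}$. Hence $\Delta = \mathrm{conv}(\bar x_1,\dots,\bar x_T)$ is determined by $X$ alone.

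Next I would identify the $\bar v_i$ as precisely the vertices of $\Delta$. Since the columns of $V$ are linearly independent, their positive rescalings $\bar v_i$ are linearly, hence affinely, independent; therefore none of them is a convex combination of the others, so each is an extreme point of $\Delta$, and affine independence rules out any additional vertices. The vertex set of a polytope is intrinsic, so $\{\bar v_1,\dots,\bar v_{KL}\}$ is exactly the vertex set of $\mathrm{conv}(\bar x_1,\dots,\bar x_T)$. Running the identical argument for $(V',G')$ shows $\{\bar v_1',\dots,\bar v_{KL}'\}$ is that same vertex set. Matching the two sets produces a permutation $\Pi$ and a positive diagonal matrix $\Lambda$ (recording the ratios of the $\|\cdot\|_1$-norms) with $V = V'\Lambda\Pi$, as claimed.

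The main obstacle — really the only nontrivial point — is the normalization step and the verification that each $\bar x_j$ is a genuine \emph{convex} (not merely conical) combination of the $\bar v_i$; this is where the nonnegativity of both factors is essential and where the arbitrary positive scaling $A$ permitted by (A) gets absorbed. Everything after that is the standard argument that the columns of the dictionary are the vertices of the convex hull of the normalized data, specialized here to the full-column-rank stacked matrix $V$.
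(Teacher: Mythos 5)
Your proof is correct, but it proceeds along a genuinely different route from the paper's. The paper proves this theorem as a corollary of Theorem~\ref{thm: exact-recovery}: LECS is a deterministic algorithm, and since it provably recovers \emph{any} factorization satisfying (A), (B), (C) up to permutation and scaling, two such factorizations of the same $X$ must coincide up to permutation and scaling --- an algorithmic uniqueness argument. You instead give the classical geometric argument for separable NMF, applied to the stacked matrix $V$ of (\ref{eq: block convsum}): after $\ell_1$-normalization, nonnegativity of $V$ and $G$ places every normalized column of $X$ in the convex hull of the normalized columns of $V$; separability (A) forces that hull to equal the convex hull of the normalized columns of $X$, which is intrinsic to $X$; and full column rank (C) makes the normalized columns of $V$ affinely independent, hence exactly the vertex set of that hull, so matching vertex sets across the two factorizations yields $V = V' \Lambda \Pi$ (your handling of the two technical points --- zero columns of $X$, and absorbing the scaling matrix $A$ through normalization --- is sound). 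Your route is self-contained, bypasses the clustering and sorting lemmas entirely, and establishes the stated conclusion from (A) and (C) alone, making explicit that (B) is not needed to identify $V$; this is a small strengthening of the theorem as stated. What the paper's route buys in exchange is more than the stated conclusion: the algorithmic argument simultaneously pins down $H$ and the individual $W_\ell$ up to permutation and scaling --- which is precisely where (B) becomes indispensable, since it disentangles the $KL$ columns of $V$ into $K$ groups of $L$ shifted copies --- and the same reasoning transfers to the noisy setting via Theorem~\ref{thm: noisy-recovery-abridged}, which the purely geometric argument does not immediately provide.
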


The proof of this result follows directly from Theorem~\ref{thm: exact-recovery}, since our proposed algorithm  (namely, LECS described in Algorithm~\ref{alg: LECS})
is deterministic and guaranteed to recover the underlying CNMF up to permutation and scaling. The same approach entails the uniqueness of a solution to a noisy CNMF problem $\tilde X = \convsum + E$ among solutions satisfying the conditions of Theorem~\ref{thm: noisy-recovery-abridged}.
}

The first condition \separable\ is inherited directly from the separable NMF literature and used only to guarantee that a separable NMF algorithm can identify the columns of $V$.
One notable exception is that in separable NMF problems, one usually requires that $V = X[:, \calC]$.
In this paper, we \vtwominor{relax} this condition to $V A = X[:, \calC]$ for some diagonal matrix $A$ with strictly positive diagonal entries.
This weaker constraint allows the columns of $V$ to be arbitrarily scaled in the matrix $X$.
However, this more general condition makes the problem no harder, since we can use a separable NMF algorithm to identify the columns of $VA$ and use this scaled matrix throughout the rest of our procedure.
In Section \ref{sec: generalize}, we show that \separable\ can actually be weakened to the more general \emph{sufficiently scattered} condition.

The second condition, \distbound\, tells us that each row of $H$ must differ from all other rows and their shifted variants.
Moreover, it tells us the rows must be \vtwominor{distinct from its own shifted variants.}
Since we are generally unconcerned with the scale of a row of $H$, the cosine of the angle between two vectors serves as a useful scale-invariant measurement of similarity
    (if the vectors have mean zero, it is exactly the correlation between the two vectors).
\distbound\ will be an important condition after we have a permuted estimate of $G$, since we will need to rearrange the rows to obtain a matrix that satisfies the constraints of (\ref{eq: block convsum}).
\vtwominor{
In practice, this is a very reasonable assumption; all we require is (a) that any two factors $h_i, h_j$ must not be highly correlated (in which case we should just have one factor), and (b) that the factor $h_j$ is not periodic with a period smaller than $L$ (in which case we can choose a smaller $L$).
}

In our approach, \fullrank\ guarantees that SPA terminates successfully and locates the columns of~$V$.
We also use it to bound the error when estimating $G$ via nonnegative least squares.
However, other algorithms like the Successive Nonnegative Projection Algorithm (SNPA) from~\cite{gillis2014successive} do not require \fullrank.
Similarly,~\cite{lotstedt1983perturbation} defines least squares robustness results in the rank deficient case.
Although we assume \fullrank\ in our results to simplify our analysis, it can be weakened to the condition that no column of $V$ is contained in the convex cone generated by the other columns of $V$.
We elaborate on this in Section \ref{sec: generalize}.

\noisebound\ bounds the 1-norm of any column of the noise matrix $E$.
When $E=0$, this is clearly satisfied.
Otherwise, we will later see that if $\epsilon$ is sufficiently small, we can still recover noisy estimates of $W_1, \hdots, W_\ell, H$.
Note that this bound differs slightly from the noise bounds in many separable NMF algorithms.
In particular, most separable NMF algorithms require bounds on the 2-norm of any column of the noise matrix $E$, whereas we bound the 1-norm.
Since $\| v \|_2 \leq \| v \|_1$ for any vector $v \in \reals^n$, we make a stronger assumption on the noise.
This stronger condition is intimately related to the following.
In separable NMF, we require $\| H \|_{1, col} \leq 1$.
In practical applications where this might not hold, we scale the columns of $X$ so that this assumption is satisfied and show that the new problem is equivalent; see for example~\cite{gillis2014fast}.
This scaling trick does not apply to CNMF however, since the matrix $H$ is difficult to appropriately scale in the expression $\convsum$.
In Appendix A, we explain this scaling trick and its limitations in more details.

\vtwo{
\emph{Is convolutive separability reasonable in practice?}
In challenging real-world scenarios, the convolutive separability condition will most likely be violated, because several sources will overlap in all time windows.
Also, the noise level will be typically higher than what our bounds allow (see Theorem~\ref{thm: noisy-recovery-abridged}).
These two issues are analogous to those encountered by separable NMF.
Nevertheless, as explained in the introduction,
separable NMF has been used successful in many challenging real-world problems, either directly or as an initialization strategy.
Accordingly, we believe many real-world signals can be approximated well under the convolutive separability assumption because, although several sources are active  in all time windows, some time windows will contain a single source that is dominant.
Moreover, as it will be shown in the experiments, our approach leads to efficient initialization strategies in challenging scenarios.
Additionally, our approach can be generalized in a number of different ways (see Section~\ref{sec: generalize}).
For example, the separability assumption can be relaxed to the sufficiently scattered condition by using minimum-volume NMF instead of separable NMF.
}

\section{Algorithm Description and Recovery Guarantee}
\label{sec: recovery}

In this section, we first describe in details our proposed algorithm (LECS, Algorithm~\ref{alg: LECS}) in Section~\ref{LECSdecr}, and then prove its correctness \vtwominor{under the convolutive separability assumptions} in the noiseless case (Section~\ref{guaranteeExact}) and in the presence of noise in (Section~\ref{guaranteeNoise}).

\begin{algorithm}[h]
\caption{Locate-Estimate-Cluster-Sort Algorithm (LECS) for the CNMF
Problem}
\label{alg: LECS}
\begin{algorithmic}[1]
    \Require An $N \times T$ matrix $\tilde X$, dimensions $L, K, R=KL$, and a parameter $t$.
    \Ensure $\tilde H, \tilde W_1, \hdots, \tilde W_L$.
    \State $\tilde V \gets$ \verb-OrConSPA-($\tilde X,\ R,\ t$).
        \label{step: SPA}
    \State $\tilde G \gets \argmin_{G \geq 0} \| \tilde V G - \tilde X \|$.
        \label{step: NNLS}
    \State $\calC_1, \hdots, \calC_k \gets$ \verb-ShiftCluster-($\tilde G,\ K,\ L$). \label{step: cluster}
    \State $\pi_k \gets$ \verb-ShiftSort-($\tilde G,\ L,\ \calC_k)$ for all $k = 1, \hdots, K$. \label{step: sort}
    \State $\tilde W_\ell[:, k] \gets V[:, \pi_k(\ell)]$ for all $k = 1, \hdots, K$ and $\ell = 1, \hdots, L$.
    \State \vtwominor{Let $a=\min(L, T-j)$}, and define $\tilde H$ to be a $K \times T$ matrix given by
        \begin{align*}
            H[k, j] &= \frac{1}{\vtwominor{a}} \sum_{\ell=1}^{\vtwominor{a}} G[\pi(\ell), j+\ell-1].
        \end{align*}
    \State \Return $\tilde H, \tilde W_1, \hdots, \tilde W_L$
\end{algorithmic}
\end{algorithm}

\subsection{Description of the LECS Algorithm} \label{LECSdecr}

\vtwo{
At a high level, our approach will be as follows: we will use the successive projection algorithm (SPA,~\cite{gillis2014fast}) to identify the column indices $\calC$ of $\tilde X$ that correspond to scaled columns of $V$, then leverage this noisy version of $V$ to obtain some estimate of $G$. Finally, we will cluster and sort the rows of $G$ to find the rows of $H$ up to some error.
}
The proposed Locate-Estimate-Cluster-Sort (LECS) algorithm is broken down into four main steps.

    \noindent $\bullet$ \textbf{Locate}. First, LECS `locates' the column \vtwominor{indices $\calC$ of $\tilde X$ that correspond to} $V' = V A D_{VA}$;
    the columns of $VA$ are present in $X$, and accordingly their noisy variants are present in $\tilde X$, so locating the column indices $\calC$ such that $VA = X[:, \calC]$ \vtwominor{will identify} a scaled, permuted variant of $V$.
    The algorithm used a modified version of SPA~\cite{gillis2014fast}, which we call Oracle Conic SPA (OrConSPA); see Algorithm~\ref{algo: OrConSPA} and Appendix A for more details.
    \vtwo{
    This method first removes the columns of $\tilde X$ that have a low signal to noise ratio using a hyperparameter $t$. Then, the algorithm rescales the columns of $\tilde X$ in order to approximately project the columns of $\tilde X$ from the conic hull of the columns of $VA$ onto the convex hull of the columns of $VA$ (the projection is approximate because of the noise). Finally, the algorithm applies SPA to recover the index set $\calC$.
    The name Oracle Conic SPA comes from the `oracle' used to select the hyperparamter $t$ and from the fact that the algorithm allows $X$ to be in the conic hull of $VA$, in contrast to standard SPA which requires $X$ to be in the convex hull of $VA$.
    This method requires $\order(NTKL)$ operations~\cite{gillis2014fast}, applying SPA being the most expensive step.
    }

    \noindent $\bullet$ \textbf{Estimate.}
    In this step, LECS estimates the rows of $G' = D_{VA}^{-1} A^{-1} G$ using nonnegative least squares (NNLS).
    Estimating $G'$ is essential because the convolutive structure is contained in its rows, which are shifted variants of the rows of $H$.
    Solving a convex NNLS problem up to any given precision can be performed in polynomial time using an interior point method (IPM).
    However, IPMs are second-order methods and hence are computationally demanding.
    We therefore instead use the block pivot method from~\cite{kim2011nnlspivot} which requires one least squares solve per iteration.
    Each least squares solve takes $\order(NK^2L^2 + NTKL)$ operations, and the algorithm almost always converges after a few iterations.

    \noindent $\bullet$ \textbf{Cluster.}
    LECS then clusters the rows of $G'$ into $K$ groups $\calC_1, \hdots, \calC_K$ according to which row of $H$ they are shifted variants of.
    One cluster should contain the $L$ shifted variants of one row of $H$.
    The algorithm achieves this by computing the $L$-shift similarity between every pair of rows in $G'$, then greedily constructing clusters by adding the available row with the highest average similarity to the rows in the cluster. This simple greedy procedure requires $\order(TK^2L^3)$ operations; see Algorithm~\ref{algo: cluster} and Lemma~\ref{lemma: clustering}.

    \noindent $\bullet$ \textbf{Sort.}
    Finally, within each cluster $C_k$, LECS sorts the rows of $G'$ based on their shifted similarity to the other rows in the cluster.
    The algorithm uses a comparison-based sorting algorithm with a comparison operator $\leq$ defined by shifted angle scores.
    In particular, for two indices $i,j$, we have $i \leq j$ if $G[j, :]$ can be better expressed as a shifted copy of $G[i, :]$ than the other way around, measured via the cosine between the two vectors.
    This comparison operator is guaranteed to produce a strict, consistent ordering when the criteria of Theorem~\ref{thm: noisy-recovery-abridged} are satisfied.
    However, note that, when these conditions fail to hold there is no guarantee that the comparison operator will produce a coherent ordering.
    Once this comparison operator has been defined, we can use any comparison-based sorting algorithm (such as merge sort or quick sort) to order the cluster $C_k$; in our implementation, we use a simple selection sort. When the operator is not consistent (that is, $i \leq j$, $j \leq k$ but $i \nleq k$ for some indices $i,j,k$), we select at each step the index which is less than or equal to the most other vectors (breaking ties arbitrarily).
    This sorting procedure requires $\order(TL^3)$ operations; see Algorithm~\ref{algo: sorting} and Lemma~\ref{lemma: sorting}.

Finally, using the clustering and sorting, the algorithm reconstructs $W_1, \hdots, W_\ell$ rearranging the columns of~$V$.
Each row of $H$ is constructed  by taking the de-shifted average of the rows of $G$ in a particular cluster.
The total computational cost of Algorithm~\ref{alg: LECS} is $\order(NTKL+TK^2L^3)$ operations plus the time to solve the nonnegative least squares problem.




\begin{algorithm}[H]
\vtwo{
\caption{Successive Projection Algorithm (\cite{gillis2014successive}, SPA) \label{algo: SPA}}
\begin{algorithmic}
    \Require An $N \times T$ matrix $\tilde X$ and a parameter $R \in \naturals$.
    \Ensure An index set $J$ with $|J| = R$.
    \State Let $B = \tilde X$, $J = \{ \}$, $r=1$.
    \While{$B \neq 0$ and $r \leq R$}
        \State $p \gets \argmax_j \| B[:, j] \|_2$.
        \State $B \gets \left( I - \frac{B_{:p} B_{:p}^T}{\| B_{:p} \|_2^2} \right)  B$.
        \State $J \gets J \cup \{ p \}$.
        \State $r \gets r+1$.
    \EndWhile
    \State \Return $J$.
\end{algorithmic}
}
\end{algorithm}

\begin{algorithm}[H]
\caption{Oracle Conic SPA (OrConSPA) \label{algo: OrConSPA}}
\begin{algorithmic}[1]
    \Require An $N \times T$ matrix $\tilde X$, the parameter $R \in \naturals$, and a threshold value $t > 0$.
    \Ensure An index set $J$ with $|J| = R$.
    \State Define a diagonal matrix $Y$ by \\
        $Y_{jj} =
        \begin{cases}
            0 & \text{ if } \| \tilde X[:, j] \|_1 \leq t \\
            \tilde X[:, j] & \text{ otherwise}
        \end{cases}$.
    \State $\tilde X' \gets \tilde X Y$.
    \State $J \gets $ \verb_SPA_($\tilde X',\ R$).
    \State \Return $J$.
\end{algorithmic}
\end{algorithm}

\begin{algorithm}[H]
\caption{Shift Cluster Algorithm \label{algo: cluster}}
\begin{algorithmic}[1]
    \Require A $KL \times T$ matrix $\tilde G$ and parameters $K,\ L$.
    \Ensure $\calC_1, \hdots, \calC_k$
    \State $\calA = \{1, \hdots, KL\}$
    \For{$k = 1, \hdots K$}
        \State Choose $i_1 \in \calA$ arbitrarily.
        \State Find $i_2, \hdots, i_{L}$ by solving
            $\argmax_{i_2, \hdots, i_{L} \in \calA}\ \sum_{\ell=2}^{L} \cos_{L}(\tilde G[i_1, :], \tilde G[i_\ell, :])$.
        \State $\calC_k \gets \{i_1, \hdots, i_L \}$, $\calA = \calA \setminus \{i_1, \hdots, i_L \}$.
    \EndFor
    \State \Return $\calC_1, \hdots, \calC_k$.
\end{algorithmic}
\end{algorithm}

\begin{algorithm}[H]
\caption{Shift Sort Algorithm \label{algo: sorting}}
\begin{algorithmic}[1]
    \Require A $KL \times T$ matrix $\tilde G$, the parameter $L$, and a set of $L$ indices $\calC$.
    \Ensure A map $\pi : \{1, \hdots L\} \rightarrow \calC$.
    \State For each $i, j \in \calC$, compute
    \begin{align*}
        \mu_{\text{left}}(i, j)
        = \max_{\ell=0,\hdots,L-1}\ \cos( S_\ell^T \tilde G[i,:] - \tilde G[j, :] ),
        \\
        \mu_{\text{right}}(i, j)
        = \max_{\ell=0,\hdots,L-1}\ \cos( \tilde G[i,:] - S_\ell^T \tilde G[j, :] ).
    \end{align*}
    \State Define a comparison operator on two indices $i, j$ by
    \begin{align*}
        (i \leq_{L, \tilde G} j)
        =
        \begin{cases}
        \text{True} &
            \text{if }
            \mu_{\text{left}}(i, j) \geq \mu_{\text{right}}(i, j) \\
        \text{False} & \text{otherwise}
        \end{cases}.
    \end{align*}
    \State Sort $\calC$ using the comparison operator $\leq_{L, \tilde G}$. Let $\pi$ be the resulting indexed list (with indices  $\{1, \hdots, L\}$).
    \State \Return $\pi$.
\end{algorithmic}
\end{algorithm}

\subsection{Guarantee for the Exact Problem} \label{guaranteeExact}

The exact problem is when $E = 0$.
Then \noisebound\ is satisfied with $\epsilon=0$.
In this case, Algorithm~\ref{alg: LECS} provably recovers $H$ and each $W_\ell$ for any $\delta > 0$ and any $A$ such that $\min_i A_{ii} > 0$.

\begin{theorem}[Exact Recovery]
\label{thm: exact-recovery}
    Given a CNMF problem that is convolutive separable with respect to $\epsilon = 0$ and some $\delta, A$, Algorithm \ref{alg: LECS} with inputs $\tilde X = X, K, L$ recovers the unique factorization $H, W_1, \hdots, W_\ell$ satisfying (A), (B), and (C), up to a permutation and scaling, in polynomial time; more precisely $\order(NTKL + TK^2L^3)$ flops plus one NNLS solve.
\end{theorem}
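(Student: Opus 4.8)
The plan is to follow the four stages of Algorithm~\ref{alg: LECS} (\textbf{Locate}, \textbf{Estimate}, \textbf{Cluster}, \textbf{Sort}) in order, showing that when $\epsilon=0$ each stage returns its target object exactly, up to the unavoidable permutation/scaling ambiguity, and that conditions (A)--(C) are precisely what each stage consumes. Since the algorithm is deterministic, establishing correct recovery simultaneously yields the ``unique factorization'' statement. Throughout, $\tilde X = X = VG$ with $VA = X[:,\calC]$ by \separable\ and $V$ of full rank $KL$ by \fullrank.

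\textbf{Locate and Estimate.} First I would invoke the exact-recovery guarantee for SPA recalled in Appendix~A. With $\epsilon=0$ the thresholding matrix $Y$ discards only genuine zero columns, and after its rescaling every surviving column of $X$ lies in the convex hull of the normalized columns of $VA$, whose extreme points are attained exactly at the indices $\calC$; because $VA$ is full column rank, SPA identifies $\calC$ exactly (up to reordering). Hence Step~\ref{step: SPA} returns $\tilde V = V \Lambda_V \Pi_V$ for a positive diagonal $\Lambda_V$ and a permutation $\Pi_V$. Full column rank of $\tilde V$ makes the NNLS problem of Step~\ref{step: NNLS} strictly convex with a unique minimizer; since $\Pi_V^{T}\Lambda_V^{-1}G$ is nonnegative and attains the lower bound $\tilde V(\Pi_V^{T}\Lambda_V^{-1}G)=VG=X$, it \emph{is} that minimizer. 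Thus $\tilde G$ equals $G$ up to an inverse row permutation and a positive row scaling.

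\textbf{Cluster and Sort.} The rows of $\tilde G$ are positively scaled copies of the rows of $G=[HS_0;\dots;HS_{L-1}]$, i.e.\ of the vectors $h_k S_{\ell-1}$. Using the scale-invariance of $\cos_L$, two rows from the same $h_k$ differ by a shift of length at most $L-1$, so their $L$-shift similarity is $1$, while two rows from distinct $h_i,h_j$ differ, after composing their individual shifts of length $<L$, by a shift of length $<2L$, so their similarity is at most $\cos_{2L}(h_i,h_j)\le 1-\delta<1$ by the first bullet of \distbound. Consequently the greedy objective of Algorithm~\ref{algo: cluster} is at most $L-1$ and equals $L-1$ only on the true sibling set, so the correct partition $\calC_1,\dots,\calC_K$ is recovered (Lemma~\ref{lemma: clustering}). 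Inside a cluster the rows are scaled shifts $h_k S_a$, $a=0,\dots,L-1$; for $a<b$, left-shifting the more-shifted row by $b-a$ aligns it exactly with the less-shifted one (so one of $\mu_{\text{left}},\mu_{\text{right}}$ equals $1$), whereas every competing left-shift compares genuinely distinct self-shifts of $h_k$ and is at most $1-\delta$ by the second bullet of \distbound. Hence $\le_{L,\tilde G}$ coincides, up to its fixed orientation, with ordering by shift amount and is a strict total order, so Algorithm~\ref{algo: sorting} returns the correct $\pi_k$ (Lemma~\ref{lemma: sorting}). Then $\tilde W_\ell[:,k]=V[:,\pi_k(\ell)]$ recovers the columns of $W_\ell$ up to the scaling encoded in $\Lambda_V\Pi_V$, and the de-shifted average defining $\tilde H$ averages equal scaled copies of $h_k$, returning $h_k$ up to a positive scalar; collecting these ambiguities gives $V=V'\Lambda\Pi$. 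The runtime follows by adding the per-stage costs recalled in Section~\ref{LECSdecr}: $\order(NTKL)$ for OrConSPA, $\order(TK^2L^3)$ for clustering, $\order(TL^3)$ per cluster for sorting, and one NNLS solve.

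\textbf{Main obstacle.} The delicate point is the exact bookkeeping of the shift operators at the boundaries. Because $S_\tau$ zero-pads and truncates, I must verify that ``a shift of the same $h_k$'' really yields $\cos_L=1$ and that the composition-of-shifts estimate bounding cross-cluster similarity by $\cos_{2L}$ holds in the relevant direction; this is also where the averaging length $a=\min(L,T-j)$ in the reconstruction step must be handled so that truncated shifts near the right boundary do not corrupt $\tilde H$. The second subtle point is confirming that $\le_{L,\tilde G}$ is genuinely transitive in the exact case, so that any comparison sort (not merely the tie-robust selection sort) returns the shift order; here the second bullet of \distbound, excluding self-similar shifts, is exactly what is needed.
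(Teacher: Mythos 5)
Your proposal is correct and takes essentially the same route as the paper's own proof: SPA under (A) and (C) for exact location, full-rank NNLS for exact recovery of $G$ up to row permutation and positive scaling, the two bullets of (B) to validate the clustering and sorting steps, and determinism of Algorithm~\ref{alg: LECS} to convert recovery into the uniqueness claim. Your write-up is in fact more detailed than the paper's terse argument (e.g., strict convexity of the NNLS step, the $\cos_{2L}$ composition bound for cross-cluster rows, and the shift-truncation bookkeeping you flag as the main obstacle, which the paper delegates implicitly to Lemmas~\ref{lemma: clustering} and~\ref{lemma: sorting}).
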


\begin{proof}[Proof of Correctness]
	Because of conditions \fullrank\ and \separable, SPA is guaranteed to find the columns of $V$.
	\fullrank\ also guarantees nonnegative least squares will return the rows of $G$.
	All that remains is to show the grouping correctly recovers $W$ and $H$.

	Given any two rows $g_i, g_j$ that are both shifted versions of some row of $H$, their similarity is $\cos_L(g_i, g_j)=1$.
	Rows that are not shifted variants of one another cannot have similarity greater than or equal to $1-\delta$ by condition \distbound.
	This means the grouping given by the clustering step is necessarily correct.

	\distbound\ also entails that the ordering from the sort step is also correct.
	Since the ordering is correct, the construction of $H$ and each $W_\ell$ must also be correct up to a permutation and scaling of the rows of $H$.

	Uniqueness follows from the success of the algorithm.
	Suppose we have two factorizations $W_1, \hdots, W_L, H$ and $W_1', \hdots, W_L', H'$ such that $X = \convsum = \sum_{\ell=1}^L W_\ell' H' S_{\ell-1}$.
	Then given $X$, Algorithm \ref{alg: LECS} recovers the first factorization up to permutation and scaling.
	However, Algorithm \ref{alg: LECS} also recovers the latter factorization up to permutation and scaling.
	Hence it must be that one factorization is a permute, scaled version of the other, that is, $H = \Pi \Lambda H'$ for some permutation matrix $\Pi \in \{0,1\}^{K \times K}$ and some diagonal matrix $\Lambda \in \reals^{K \times K}$ with positive diagonal elements.
\end{proof}

\subsection{Guarantee for the Noisy Problem} \label{guaranteeNoise}

Even when $E \neq 0$, we can still recover noisy estimates of $H, W_1, \hdots, W_\ell$ given sufficiently low noise levels.

\begin{theorem}[Noisy Recovery]
\label{thm: noisy-recovery-abridged}
    Suppose a CNMF problem with inputs $\tilde X, L, K$ is convolutive separable with respect to $\delta, \epsilon, A$ and consider some parameter $t > 0$.
    Let $V' = V A D_{VA}$ and $G' = D_{VA}^{-1} A^{-1} G$ and suppose
    \begin{align}
        \epsilon + t
        &< \| V A \|_{-1, col} ,
            \label{eq: abridge-thresh}
            \\
        \frac{\epsilon}{t}
        &< C_a \left( \frac{\sigmin(V') }{ \sqrt{KL} \cond(V')^2 } \right),
            \label{eq: abridge-scale-noise}
            \\
        \epsilon
        &<
        C_b \left( \frac{\rho \delta}{\| G' \|_{2, col} \sqrt{T}} \right),
            \label{eq: abridge-noise}
    \end{align}
    where $\rho =
        C_a \left( \frac{\sigmin(V') }{ \sqrt{KL} \cond(V')^2 } \right)-
        \frac{\epsilon}{t}$
    and $C_a, C_b$ are universal constants independent of all other terms.
    Furthermore, without loss of generality assume that $\| V A \|_{2, col} > 1$ and $t > 1$ (see below).
    Then in polynomial time, more precisely $\order(NTKL + TK^2L^3)$ flops plus one NNLS solve, Algorithm \ref{alg: LECS} finds $\tilde H \in \reals^{K \times T}$ with bounded error, in the sense that there exists some permutation $P$ such that
    \begin{align*}
        \min_{1 \leq j \leq K}\ \cos\left( h_j, \tilde h_\permj \right)
        \geq
        1 - \frac{\delta}{2},
    \end{align*}
    where $h_j, \tilde h_\permj$ are the $j$th and $\permj$th rows of $H$ and $\tilde H$, respectively.
\end{theorem}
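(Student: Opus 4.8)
The plan is to analyze the four stages of Algorithm~\ref{alg: LECS} in sequence, propagating a column- and row-wise error bound through each stage and invoking the three hypotheses (\ref{eq: abridge-thresh})--(\ref{eq: abridge-noise}) precisely where each is needed. I would begin with the \textbf{Locate} step. The thresholding matrix $Y$ in OrConSPA zeroes out every column of $\tilde X$ whose $1$-norm is at most $t$; by \separable\ and \noisebound\ every genuine vertex column has $1$-norm at least $\| V A \|_{-1, col} - \epsilon$, so (\ref{eq: abridge-thresh}) guarantees that all columns of $VA$ survive while columns that are essentially noise are discarded. The subsequent rescaling sends each surviving column to a near-unit-$1$-norm vector, approximately realizing the projection of the conic hull of $VA$ onto its convex hull; since every retained denominator exceeds $t$, the induced noise on the rescaled data is at most $\order(\epsilon / t)$. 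Condition (\ref{eq: abridge-scale-noise}) is then exactly the noise tolerance required by the robustness guarantee for SPA from \cite{gillis2014fast}, so SPA returns the correct index set $\calC$ and hence an estimate $\tilde V'$ whose columns match those of $V' = V A D_{VA}$ up to a permutation and an error controlled by $\epsilon/t$, $\cond(V')$, and $\sigmin(V')$; the slack $\rho$ quantifies how far below the tolerance we sit and sets the error budget for the remaining stages.

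Next I would control the \textbf{Estimate} step. Because \fullrank\ holds, $V'$ has full column rank and the NNLS problem $\min_{G \ge 0} \| \tilde V' G - \tilde X \|$ is well posed with a stable solution. Combining the column error in $\tilde V'$ from the previous paragraph with the $\epsilon$-bounded perturbation of $\tilde X$, I would invoke a constrained least-squares perturbation bound in the spirit of \cite{lotstedt1983perturbation} to bound, for each row, $\| \tilde G'[i,:] - G'[i,:] \|_2$ by a quantity scaling like $\epsilon \sqrt{T}$ times a polynomial factor in $\cond(V')$ and $1/\sigmin(V')$. The factor $\sqrt{T}$ appears because the bound aggregates over all $T$ columns, and $\| G' \|_{2, col}$ enters once this absolute row error is turned into a relative, scale-invariant one.

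With a per-row $2$-norm error on $\tilde G'$ in hand, the \textbf{Cluster} and \textbf{Sort} steps reduce to showing that the relevant cosine comparisons do not flip. Since the perturbation of $\cos(u,v)$ under additive errors is of order the error divided by the smaller of $\| u \|_2, \| v \|_2$, the row-wise bound from the Estimate step translates into a uniform perturbation of every $L$-shift similarity and of the $\mu_{\text{left}}, \mu_{\text{right}}$ scores. By the first part of \distbound, two rows that are genuine shifts of a common row of $H$ have noiseless similarity $1$, perturbed to at least $1 - \order(\cdot)$, whereas two rows that are not shifts of one another satisfy $\cos_{2L} \le 1 - \delta$, perturbed to at most $1 - \delta + \order(\cdot)$; the second part of \distbound\ separates distinct self-shifts of a single row by $\delta$, which is what makes the sort comparisons strict. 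Condition (\ref{eq: abridge-noise}), whose right-hand side carries the factor $\rho \delta / (\| G' \|_{2, col} \sqrt{T})$, is calibrated so that every such perturbation stays strictly below $\delta/2$; the greedy clustering of Lemma~\ref{lemma: clustering} then isolates the $L$ shifts of each row of $H$ in a single cluster, and the comparison operator of Lemma~\ref{lemma: sorting} recovers the correct shift order within each cluster.

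Finally, the reconstructed row $\tilde h_{P(j)}$ is a de-shifted average of the (at most $L$) noisy shifted copies of $h_j$ collected in its cluster; averaging reduces the aggregated noise, and converting the resulting $2$-norm error into a cosine bound yields $\cos(h_j, \tilde h_{P(j)}) \ge 1 - \delta/2$ for the permutation $P$ recovered by the sort, as claimed, with the polynomial run-time inherited stage by stage from the per-step complexities already stated. I expect the \textbf{main obstacle} to be the middle of this chain: producing a clean constrained least-squares perturbation bound that simultaneously absorbs the permuted column error of $\tilde V'$ and the $\epsilon$-noise in $\tilde X$, and then passing from an absolute $2$-norm row error to a perturbation of cosine similarities without the division by $\min(\| u \|_2, \| v \|_2)$ inflating the constant --- it is precisely this passage that forces the $\| G' \|_{2, col}$ and $\sqrt{T}$ factors in (\ref{eq: abridge-noise}) and that must be tuned so the final error lands at exactly $\delta/2$. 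Checking that the normalizations $\| V A \|_{2, col} > 1$ and $t > 1$ may be imposed without loss of generality is a minor but necessary bookkeeping step that keeps these constants clean.
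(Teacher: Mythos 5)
Your proposal is correct and follows essentially the same route as the paper: the paper proves the more detailed Lemma~\ref{lemma: noisy-recovery-full} by exactly this stage-by-stage error propagation---OrConSPA robustness (Lemma~\ref{lemma: orconspa}) for the Locate step using (\ref{eq: abridge-thresh}) and (\ref{eq: abridge-scale-noise}), the L\"otstedt-based Corollary~\ref{cor: perturbation} for the Estimate step, and the angle-perturbation Lemmas~\ref{lemma: clustering} and~\ref{lemma: sorting} for the Cluster and Sort steps calibrated by (\ref{eq: abridge-noise})---and then obtains Theorem~\ref{thm: noisy-recovery-abridged} by simplifying the lemma's conditions into the constants $C_a, C_b$ using the normalizations $\| V A \|_{2, col} > 1$ and $t > 1$. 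The only difference is presentational: the paper isolates the analysis in the full lemma and treats the theorem itself as constant bookkeeping, which your plan folds into a single argument.
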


The proof of this theorem is deferred to \vtwominor{Appendix~\ref{apdx: noisy-proof}, where we prove a more general statement in Lemma \ref{lemma: noisy-recovery-full}}.
The assumptions $\|VA \|_{2, col} > 1$ and $t > 1$ are only used to simplify (\ref{eq: abridge-noise}) and make the bound easier to read.
These assumptions are also made without any loss of generality---we can always consider the equivalent problem with $\alpha V$ and $\alpha \tilde X = \alpha X + \alpha E$ for an arbitrarily large $\alpha > 0$;
in particular for $\alpha = \max(t^{-1}, \|VA\|_{2, col}^{-1})$.
This scaling does not impact the output of the algorithm (beyond scaling), and cancelling terms shows it also does not affect inequality~(\ref{eq: abridge-thresh}) and inequality~(\ref{eq: abridge-scale-noise}).
However, one should note this scaling does impact $\| G' \|_{2, col}$ since $G'$ depends on $D_{VA}^{-1}$, which changes inequality (\ref{eq: abridge-noise}).

Algorithm~\ref{alg: LECS} depends on selecting a good value for the parameter $t$; the best error bounds is obtained as $t \rightarrow \|VA\|_{-1, col} - \epsilon$.
This parameter is used to threshold columns of $\tilde X$ with a small norm, since these columns will have relatively low signal-to-noise ratios and cause SPA to fail after scaling $\tilde X$.
In practice, one can run the algorithm several times using different values of $t$ and keep the best fit according to some heuristic (for example, mean square \vtwominor{reconstruction} error).
\vtwo{
In fact, since the conditions of the Theorem \ref{thm: noisy-recovery-abridged} imply that noisy versions of the columns of $VA$ are present in $\tilde X$, we can be certain that $\|\tilde X[:, p] \|_{-1, col}$ is within $\epsilon$ of $\|VA \|_{-1, col}$ for some $p \in \{ 1, \ldots, T \}$. So we could run the algorithm $T$ times with parameter $t = \|X[:, p]\| - 2\epsilon$
for $p=1,2,\dots,T$
and guarantee that one of the runs yields the desired solution.
}

\section{Generalizations}
\label{sec: generalize}

One strength of our approach is that any of the subroutines used for the four steps (location, estimation, clustering, and sorting) can be substituted in exchange for different subroutines.
For example, we use SPA to locate the columns of $V$, but could substitute any separable NMF algorithm in its place.
In this section, we provide several examples of improvements to LECS obtained by appropriate substitutions and highlight the generality of the high level algorithm.


\paragraph{Rank-deficient identification of $V$}
We previously mentioned that \fullrank\ is required for two reasons:
    to guarantee the success of the SPA algorithm and
    to bound the error when estimating $G$ via nonnegative least squares.
However, if we replace SPA with another separable NMF algorithm that does not require $V$ to have full column rank, then the first reason is no longer necessary.
SNPA in particular essentially requires the weaker condition that no column of $V$ can be written as a conic combination of the other columns of $V$.
By using this method in place of SPA, we can complete the location step with weaker condition than \fullrank.
This condition is also not necessary for estimating $G$, which we discuss later in the section.

\paragraph{Sufficiently Scattered Condition}
Since \separable\ is only used in identifying the columns of $V$, we could also use the \emph{sufficiently scattered}~\cite{huang2013non, fu2019nonnegative}, which is more general and allows one to determine $V$ by identifying the minimum-volume NMF of $X$.
Unfortunately, this more general condition suffers from two drawbacks:
    first, it has yet to be proven robust to noise, and, second, there are currently no algorithms \vtwominor{that are guaranteed to find} the minimum-volume NMF in polynomial time~\cite{fu2019nonnegative} (this problem is NP-hard in general, but it remains unknown whether this is still true under the sufficiently scattered condition).


\paragraph{Rank-deficient Estimation of $G$}
Under some assumptions, the matrix $V$ need not be full rank to properly estimate $G$.
In particular,~\cite{lotstedt1983perturbation} gives perturbation bounds in the rank-deficient case in a more general setting;
these bounds apply to our problem when the rank of $\tilde V$ is the same as the rank of $V$.

\paragraph{Robust Least Squares}
When we estimate $G$, we solve the problem $\min_{G \geq 0} \| \tilde V G - \tilde X \|_F^2$ in an attempt to find some $\tilde G$ with rows similar to those of $G$.
If we know the noise level $\epsilon$ in advance or have some estimate of it, the optimal estimate is given by the robust optimization problem:
    $\min_{\tilde G \geq 0} \max_{(V, G, X) \in \calC}
        \| \tilde G - G \|_{2, row}$, where
        $\calC = \{(V, G, X) :
        \| V - \tilde V \|_{2, col} \leq \epsilon_V,
        \| X - \tilde X \|_{1, col} \leq \epsilon_X,
        V, G, X \text{ satisfy (\ref{eq: block convsum}) and (A)-(D)} \}$,
where $\epsilon_V, \epsilon_X$ depend on $\epsilon$.
In general, a number of least squares-like procedures may be used in place of standard nonnegative least squares in order to minimize the worst case error $\|\tilde G - G \|_{2, row}$.


\paragraph{Generalized Similarity Metric}
The $L$-shift similarity measure utilizes the cosine of the angle between two vectors because it is scale invariant, allowing us to consider $a$ and $\alpha a$ equivalent, where $a \in \reals^n$ and $\alpha \in \reals, \alpha > 0$.
The choice of similarity measure heavily influences the clustering and sorting steps, as seen in the proofs of Lemma \ref{lemma: clustering} and Lemma \ref{lemma: sorting}.
Therefore, a wise adjustment of this measure could lead to significant improvements in the final two steps of LECS.
This might include a simple post-processing step, such as exponentiation, or an entirely different similarity measure.
See~\cite{ng2002spectral, favati2019similarity} for a more comprehensive review.

\paragraph{Spectral Clustering}
Our greedy clustering algorithm is simple but suboptimal, in the sense that it does not take advantage of the global cluster structure.
One more advanced approach is to use a spectral clustering method~\cite{ng2002spectral}.
Let the matrix $M$ be defined as
\begin{align*}
    M_{ij} =
    \begin{cases}
        1 & \text{if } \exists k \in \{1, \hdots, K\} \text{ such that } i, j \in \calC_k,  \\
        0 & \text{otherwise.}
    \end{cases}
\end{align*}
Let $v_1, \hdots, v_k$ be the eigenvectors corresponding to the $K$ largest eigenvalues of $M$.
Orient each eigenvector so that $|\max_j (v_i)_j| \geq |\min_j (v_i)_j|$ (that is, ensure the entry with the largest magnitude is positive).
Then the indices of the largest $L$ entries in $v_i$ form the representatives for cluster $i$.
The challenge is to construct a similarity matrix $\tilde M$ close to $M$ using only the observed data.
One option is to construct $\tilde M$ using the $L$-shift similarity, then set the largest $KL^2$ entries to 1 and the remaining entries to 0.
Under the conditions of Theorem~\ref{thm: noisy-recovery-abridged}, $\tilde M = M$ and therefore recovery is guaranteed.
Results from matrix perturbation theory, such as the Davis-Kahan bound~\cite{davis1970rotation}, suggest there is a strong theoretical justification for using a spectral clustering method, given the right construction of $\tilde M$.

\section{Numerical Experiments}

We first test the LECS algorithm on synthetic data and verify it correctly finds the ground truth in low noise settings.
Then, we demonstrate that LECS finds reasonable results on the spectrogram of a songbird and can be used as an effective initialization for other algorithms like multiplicative updates~\cite{schmidt2006nonnegative} and alternating nonnegative least squares (ANLS)~\cite{cnmfhals}.

All code is written in Julia~\cite{julia} and available on GitHub at \url{github.com/degleris1/CMF.jl}.
In particular, the code used at the time of publication is available in release v0.1.
We use the code from \url{github.com/ahwillia/NonNegLeastSquares.jl} to solve NNLS problems via the pivot method.
The figures for Sections \ref{sec: numerical-synth} and \ref{sec: numerical-song} are produced by the Jupyter notebooks \verb|figures/sep_synth.ipynb| and \verb|figures/sep_song.ipynb|,
and the LECS algorithm itself is available file \verb|src/algs/separable.jl|.
Note that this repository is available to other researchers as a tool for fitting CNMF models, as well as rapidly developing and testing new CNMF algorithms.

\subsection{Unstructured Synthetic Data}
\label{sec: numerical-synth}

\begin{figure}[t]
    \centering
    \includegraphics[width=\linewidth]{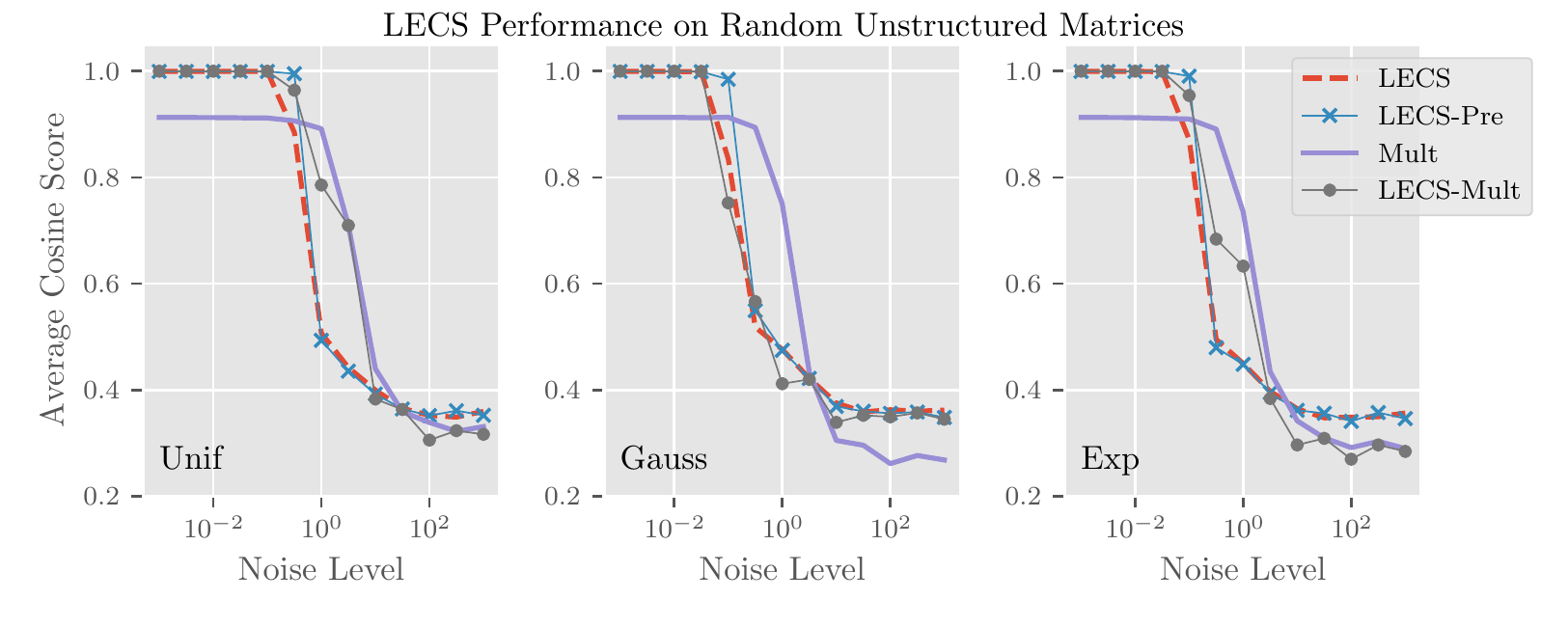}
    \caption{Performance on unstructured synthetic data.}
    \label{fig: synth-performance}
\end{figure}

In this experiment, we consider synthetic data that is separable by construction, but lacks structure in the sense that matrices are generated randomly without constraints on important parameters like condition number.
Since angle determines the uniqueness of a vector up to scaling and plays an important role in our theoretical analysis, we use it to measure performance.
Specifically, after the algorithm estimates $\tilde H$, the score is computed as the average cosine of the angles between the rows of $\tilde H$ and the rows of $H$ (minimized over all row permutations).
This is formally given by
\begin{align*}
    \text{score}(H, \tilde H)
    = \min_{P \in \calP}\quad
    \frac{1}{K} \sum_{k=1}^K \cos(H[k, :], \tilde H[P(k), :]),
\end{align*}
where $\calP$ is the set of all permutations.
We test four different algorithms:

    \noindent $\bullet$ \textbf{LECS.}
        The LECS algorithm
        as presented in Algorithm~\ref{alg: LECS}.

    \noindent $\bullet$ \textbf{LECS-Pre.}
        The LECS algorithm with SPA replaced by heuristically preconditioned SPA from~\cite{gillis2015semidefinite}.
        Heuristically preconditioned SPA computes the SVD $U \Sigma V^T$ of $\tilde X$, then runs SPA on $\Sigma^{-1} U^T \tilde X = V^T$. Under the assumption that the data points are evenly distributed within the convex cone generated by $V$, it is more robust than SPA~\cite{gillis2015enhancing}. This illustrates the flexibility of LECS with respect to the choice of the different building blocks; see Section~\ref{sec: generalize}.

    \noindent $\bullet$ \textbf{Mult.}
        The multiplicative updates algorithm for minimizing the Frobenius norm used in~\cite{schmidt2006nonnegative, mackevicius2019unsupervised}.
        This also corresponds to the $\beta$-divergence update rule from~\cite{gorlow2018multiplicative, fagot2019majorization} when $\beta = 2$.
        The algorithm is initialized randomly and scaled using the method from~\cite{ho2008thesis}. The alternating update rules runs for 5 seconds.

    \noindent $\bullet$ \textbf{LECS-Mult.}
        The same multiplicative updates algorithm as before, except using {LECS} as its initialization.

For each $\ell \in \{1, \hdots, L\}$, we let $(W_\ell)_{ij} \sim \uniform(0.5, 1.5)$ for all $i,j$, where $\uniform(a,b)$ is the uniform distribution in the interval $[a,b]$.
We let $H_{ij} \sim \uniform(0, 1) \bern(1-p)$ for all $i,j$, where $p$ is our sparsity parameter and $\bern(1-p)$ is the Bernoulli distribution with parameter $1-p$.
\vtwominor{To enforce the separability condition, we then set some entries of $H$ to zero as follows.}
\vtwominor{For each row $H[k,:]$ of $H$,} choose two random $t_k, s_k \in \{0, \hdots T-L \}$.
For all $t$ satisfying either $t_k - L \leq t \leq t_k + \lfloor L/2 \rfloor$ or $s_k - \lfloor L/2 \rfloor \leq t \leq s_k + L$, we set $H[:, t] = 0$.
Then we let $H[k, t_k], H[k, s_k] \sim \uniform(0.5, 1.5)$.
Finally, we ensure each $t_1, s_1, \hdots, t_K, s_K$ are sufficiently far apart from one another so that the same entries are never set to zero twice.
\vtwominor{This construction ensures the first half of each sequence ($W_{::k}$) is separable at one location, and the second half at another location.}
We choose ``half-sequences'' to demonstrate the full sequence need not be \vtwominor{separable at the same location}.
To generate the noise, we vary a \vtwominor{noise parameter $\beta$} and use one of three procedures: for all $i,j$

    \noindent $\bullet$ \textbf{Uniform.}
        Set $E_{ij} \sim \uniform(0, \beta)$.

    \noindent $\bullet$ \textbf{Gaussian.}
        Set $E_{ij} = \max(-X_{ij}, M_{ij})$ where $M_{ij} \sim \normal(0, \beta)$.

    \noindent $\bullet$ \textbf{Exponential.}
        Set $E_{ij} \sim \exponential(\beta)$.

We generate $n_{\text{trials}} = 10$ random matrices $X$ according the above construction with $N = 100$, $T = 250$, $K=3$, $L=5$, $p=0.75$.
Then, for each matrix, the noise level $\beta$ is varied across $n_{\text{noise}} = 13$ different values spaced logarithmically between $\beta = 10^{-3}$ and $\beta = 10^{3}$.

\vtwo{The results of these experiments are displayed in Figure~\ref{fig: synth-performance}.}
We observe that for all three noise types, LECS finds the true $H$ for small noise levels, as expected by Theorem~\ref{thm: noisy-recovery-abridged}.
Additionally, as expected, the preconditioning improves its tolerance to noise.
In comparison, Mult fail to find the true $H$ even in the lowest noise settings. \vtwo{In fact, although Mult finds a solution with low reconstruction error, it is not able to recover the ground truth factor $H$:
this is due to the non-uniqueness problem with CNMF (see Section~\ref{sec: intro}).}
\vtwominor{As a matter of fact}, Mult with LECS as an initialization keeps the true solution and outperforms random initializations.
At higher noise levels, the effects are varied.
In particular, LECS-Mult exhibits different performance depending on the type of noise, and notably performs the best (relative to random initialization) when the noise is Gaussian.
This is likely due to the fact this update rule attempts to minimize the reconstruction error using a Frobenius loss function, which is a natural objective for Gaussian noise but not uniform nor exponential noise.
A different loss function is likely necessary to identify the ground truth given these noise models;
for example, entrywise $\infty$-norm and $1$-norm losses may be more appropriate for uniform and exponential noise, respectively.

\subsection{Songbird Spectrogram}
\label{sec: numerical-song}

In practice, noise levels are often too high for the LECS algorithm to identify the ground truth, even when the separability assumptions are satisfied.
However, LECS can be used as an initialization to other algorithms for CNMF to achieve faster convergence than random initialization, and to obtain better solutions.

In this experiment, we fit a spectrogram of a singing bird from~\cite{mackevicius2019unsupervised};
the authors have kindly made this dataset publicly available at \url{github.com/FeeLab/seqNMF}.
The spectrogram matrix $\tilde X$ has 141 rows (DFT bins) and 4440 columns (timebins).
We fit the spectrogram using the LECS algorithm with $K=3,\ L=20$, and a threshold value $t=10$.
Then, we use this initialization to run Mult~\cite{schmidt2006nonnegative} and ANLS~\cite{cnmfhals} algorithms for 15 and 60 iterations, respectively. Note that ANLS, as for NMF~\cite{kim2011nnlspivot}, solves the subproblem exactly for $W$ and $H$ using an active set method, as opposed to Mult that is a gradient-like descent method.
The threshold value $t$ was chosen after sweeping over all values from 0 to 50 (approximately the maximum 1-norm of a column of $\tilde X$).

Figure \ref{fig: song-loss} displays the loss curves for Mult and ANLS using both random initialization and the output of LECS.
The loss is measured as the relative mean square error, defined as
\begin{align*}
    \text{MSE}_{\text{rel}} = \frac{\| \tilde X - \convsum \|_F}{\| \tilde X \|_F},
\end{align*}
where $W_1, \hdots, W_L, H$ are the estimated matrices.
\begin{figure}[h]
    \centering
    \includegraphics[width=0.5\linewidth]{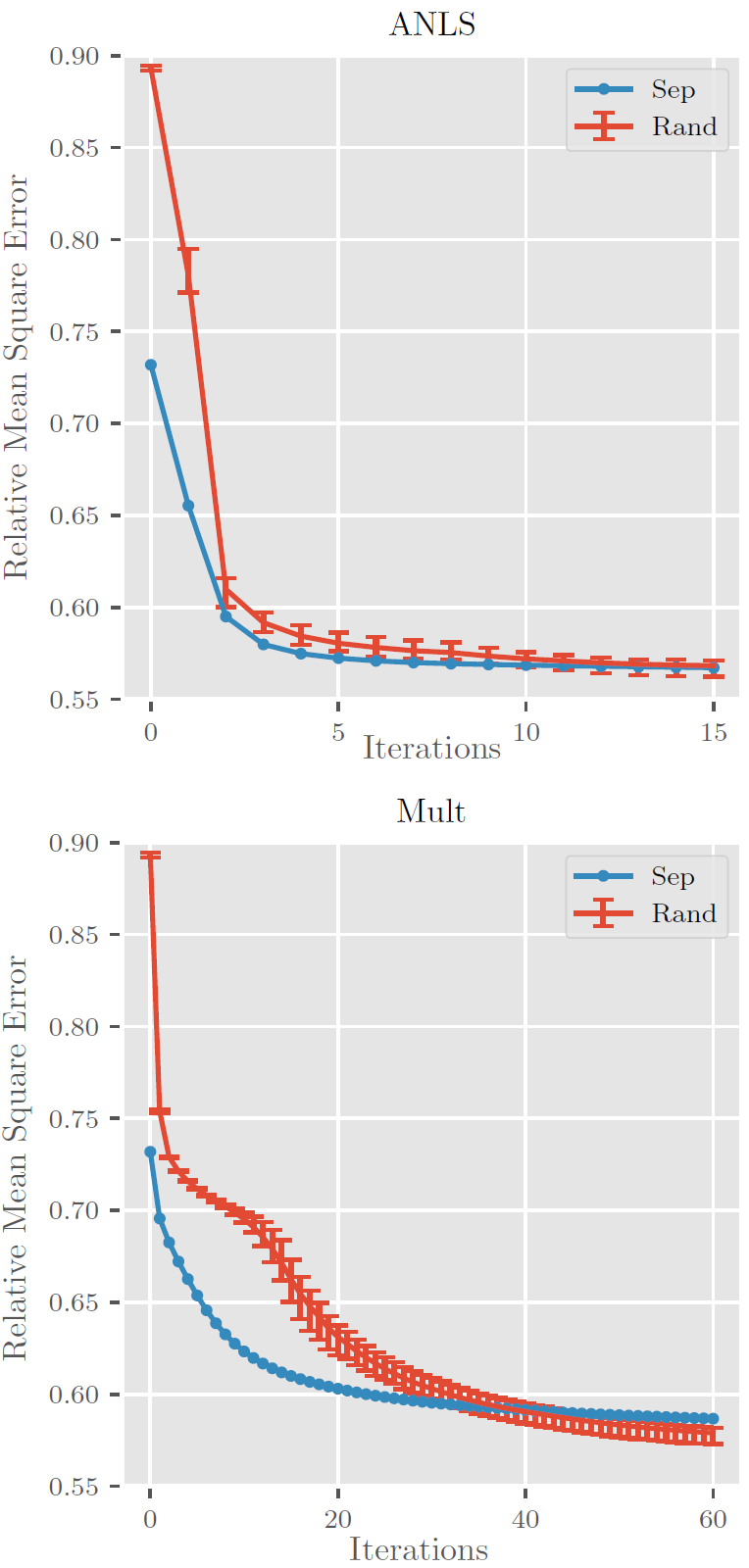}
    \caption{
    Loss plots from fitting the songbird spectrogram using both random initialization (median from 30 trials) and the output of LECS as initialization.
    Error bars represent the 10th and 90th percentiles of the 30 trials.
    }
    \label{fig: song-loss}
\end{figure}

We observe that when either Mult or ANLS are initialized using the output of LECS, they converge to a local minimum in fewer iterations.
This suggests that LECS finds a solution near some local minimum and could be used to accelerate algorithms on very large datasets. In fact, when used as an initialization for ANLS, LECS allows to obtain a high quality solution faster than random initializations. For Mult, using LECS as an initialization does not perform as well because, on average and after sufficiently many iterations, the solution generated by random initializations have lower reconstruction error. We suspect that the reason is that Mult does not deal well with input matrices with many small entries (which LECS generates) because of the zero locking phenomenon--zero entries cannot be modified by Mult while it takes many iterations for small entries to get large; see for example the discussion in~\cite{gillis2014why} and the references therein. In fact, with the LECS initialization, \vtwominor{ANLS reduces} the relative error to 56.6\% after 15 iterations, while MU reduces it to only 58.4\% after 60 iterations. Note also that ANLS performs on average better than MU with an average relative error of 56.6\% for ANLS vs.\@ 57.7\% for MU; this was already observed~\cite{cnmfhals}.

Moreover, despite a high relative mean square error, Figure~\ref{fig: song-qual} demonstrates that LECS outputs factors and a reconstruction with qualitative similarities to the ground truth.
In fact, after just a single iteration of the ANLS algorithm, the factors visibly mimic the sequential structure in the dataset.
\vtwominor{Note that, as opposed to the experiments on the synthetic data set performed in the previous section, we cannot assess the quality of the generated factors as the ground truth sources are not available.}

\afterpage{
\begin{figure}[tp]
    \centering
    \begin{subfigure}{0.8\linewidth}
        \centering
        \includegraphics[width=\linewidth]{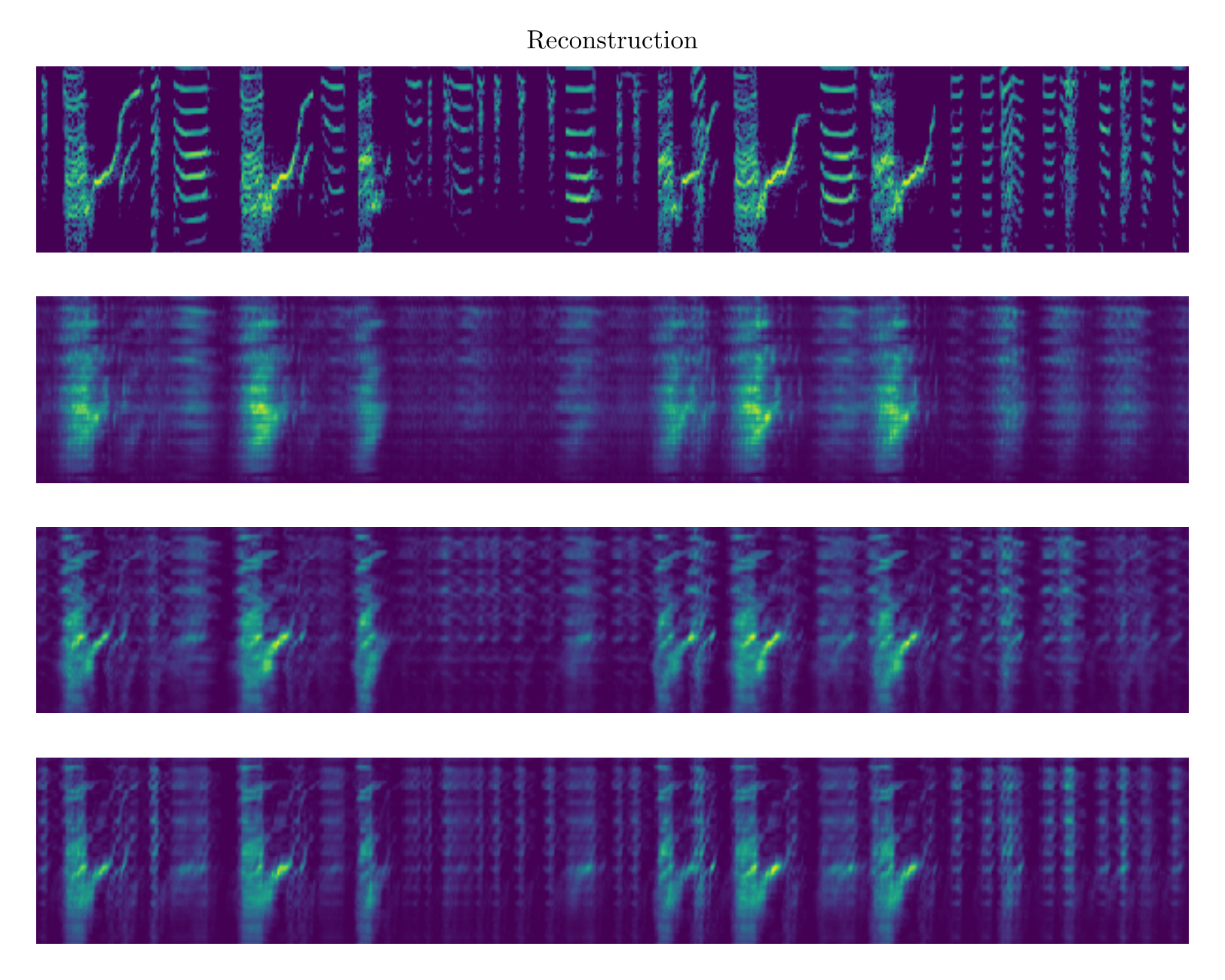}
        \caption{}
    \end{subfigure}

    \begin{subfigure}{0.8\linewidth}
        \centering
        \includegraphics[width=\linewidth]{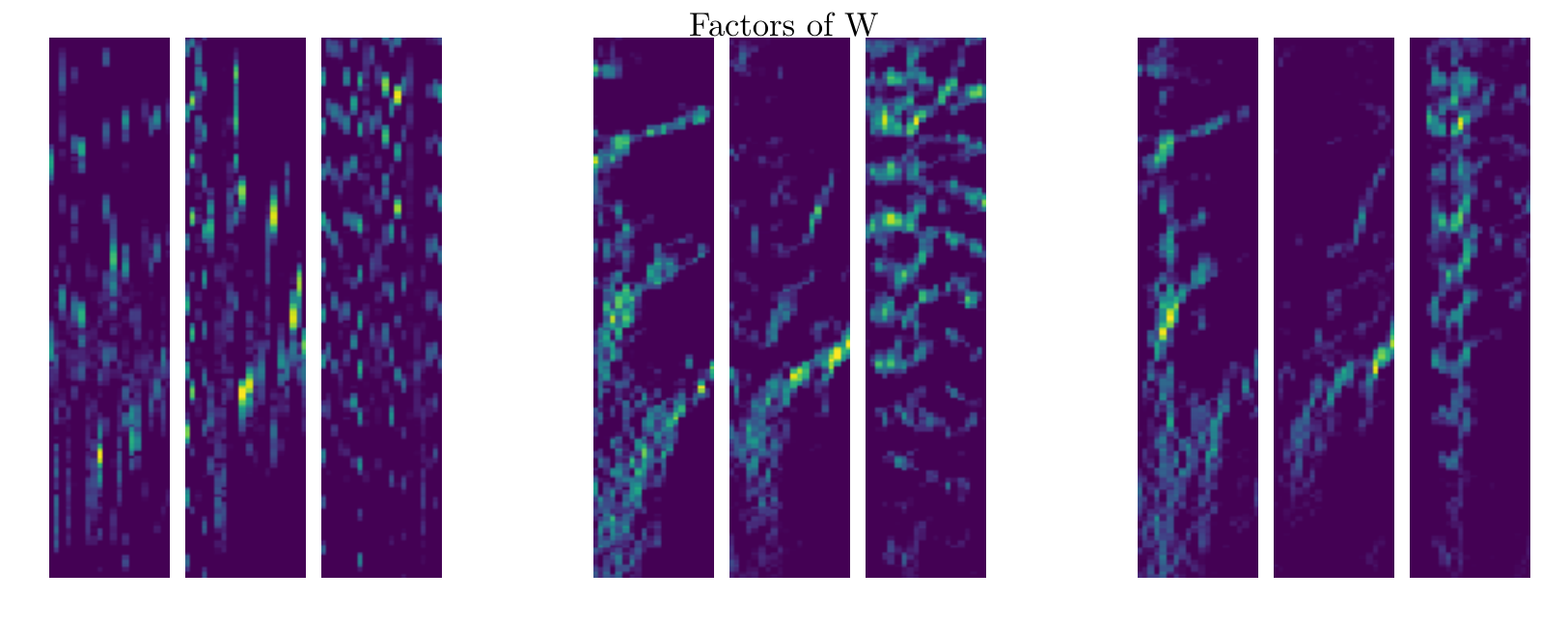}
        \caption{}
    \end{subfigure}

    \caption{
        Qualitative results on the songbird spectrogram.
        The top figure \textbf{(a)} contains 1000 timebins of the true spectrogram (top), the reconstruction produced by LECS (middle top), and the reconstruction after a single and two iterations of ANLS initialized with LECS (middle bottom and bottom). The bottom figure \textbf{(b)} shows the factors $W_{::1}, W_{::2}, W_{::3}$ produced by the LECS algorithm (left three; see Section \ref{sec: intro} for the definition of $W_{::k}$) and after a single iteration and two iterations of ANLS initialized with LECS (middle three and right three).
    }
    \label{fig: song-qual}
\end{figure}
\clearpage
}

\section{Conclusion}

In this paper we have presented a provably correct and robust algorithm for CNMF under separability conditions.
By addressing the questions of identifiability and provable recovery of the solution, this paper offers a novel theoretical perspective on the CNMF problem.
Our algorithm draws directly from methods for separable NMF and is easily generalizable---the high level algorithm does not require any particular method for each of its four steps.

Future work includes addressing two significant weakness of our approach.
First, the location step does not leverage the convolutive structure of CNMF (and could therefore possibly be made more tolerant to noise).
Second, the method cannot handle the case when some $W_{::k}$ has repeated columns, which is relevant to processing music datasets that often contain pure harmonic tones.

\appendix
\section{Appendix}

\subsection{Conic SPA Procedure}
\label{apdx: conic-spa}

\vtwo{
As mentioned in the introduction, SPA is a sequential algorithm for separable NMF that identifies the columns of $V$ among the columns of $X=VG$.
At each step, it first extracts the column of $X$ that has the largest $\ell_2$ norm, and then project all columns of $X$ onto the orthogonal complement of the extracted column; see Algorithm~\ref{algo: SPA}.
Under the assumptions that $V=X[:,\calC]$ is full column rank and the columns of $G$ have unit $\ell_1$ norm, SPA recovers the set $V$, even in the presence of noise.
In this section we generalize SPA from an algorithm for finding the vertices of a polytope (because the columns of $G$ are required to have unit $\ell_1$ norm) to an algorithm for finding the extreme rays of a cone.
In the noiseless case, it is relatively straightforward to show this generalization by an appropriate rescaling of $X$.
However, when noise is present, extra steps must be taken to ensure points that are primarily noise do not grow too large and are mistaken for a vertex.
}
In fact, in the robustness analysis of SPA, it is assumed from the scratch that the input noisy data matrix has the form $\tilde X = VG + E$, where the columns of $G$ are normalized which is key for SPA to succeed.
If $G$ is not normalized, then normalization of the input matrix is necessary and this might increase the noise drastically; for example for the columns of $X$ with very small norm. Therefore, to make SPA robust to noise in this scenario, we will need to first remove the columns of $\tilde X$ with small norm.

\begin{lemma}[SPA]\cite[Theorem 3]{gillis2014fast}
\label{lemma: spa-noisy}
    Let $\tilde X = X + E = V G + E \in \reals^{N \times T}$, where
        $V \in \reals^{N \times R}$ has rank $R$ and
        $G \in \reals^{R \times T}_+$ satisfies $\sum_i G[i, j] \leq 1$ for all $1 \leq j \leq T$.
        Additionally, suppose that $G = [I\ M] \Pi$ for some permutation matrix $\Pi$ and $M \in \reals^{R \times T-R}_+$.

    Let $\| E[:,i] \| \leq \epsilon$ for all $i$ with
        $\epsilon
        <
        \frac{C_1 \sigmin(V)}{\sqrt{R} \cond(V)^2}$,
    where $C_1$ is a global constant. 
    Let $J$ be the index set extracted by SPA, which takes time $\order(NTKL)$. Then there exists a permutation $P$ of $\{1, \hdots, R\}$ such that
    \begin{align*}
        \max_{1 \leq j \leq R}\ \| \tilde X[:,J(j)] - V[:,P(j)] \|
        =
        C_2 \epsilon \cond(V)^2,
    \end{align*}
    where $C_2$ is a global constant. 
\end{lemma}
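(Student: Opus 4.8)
The plan is to recognize SPA as a greedy maximization of the strongly convex function $f(x) = \|x\|_2^2$ over the convex hull of the columns of $V$ together with the origin, and to promote the exact-recovery argument to a perturbation bound. First I would set up the geometry. The assumption $G = [I\ M]\Pi$ with $\sum_i G[i,j] \le 1$ means that, up to the permutation $\Pi$, the first $R$ columns of $X = VG$ are exactly the columns of $V$, while every remaining column is a convex combination of $0, V[:,1], \dots, V[:,R]$. Thus in the noiseless case each $X[:,j]$ lies in the polytope $\mathrm{conv}\{0, V[:,1],\dots,V[:,R]\}$, whose nonzero vertices are precisely the columns of $V$, which are affinely independent by $\fullrank$.

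The core analytic tool is a quantitative statement that a strongly convex function on a polytope is nearly maximized only near a vertex. Concretely, I would establish the inequality
\begin{align*}
    f\Big(\textstyle\sum_k \lambda_k v_k\Big)
    \le \max_k f(v_k) - \frac{\mu}{2} \sum_k \lambda_k \big\| v_k - \textstyle\sum_l \lambda_l v_l \big\|_2^2,
\end{align*}
valid for any convex weights $\lambda$ and any $\mu$-strongly convex $f$ (here $\mu = 2$ for $f = \|\cdot\|_2^2$). The contrapositive is what drives the whole argument: if $f(x)$ is within $\eta$ of $\max_k f(v_k)$, then $x$ must lie within $\order(\sqrt{\eta})$ of some vertex, where the implied constant is controlled by $\sigmin(V)$ through the affine independence of the vertices.

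From here I would argue by induction on the $R$ steps of SPA. At each step the algorithm extracts the surviving column of largest $\ell_2$ norm and projects all columns onto the orthogonal complement of the extracted direction; in the exact case this peels off one genuine vertex per step, so a clean induction recovers all of $V$. In the noisy case I would track, at step $r$, two quantities: (i) the distance between the SPA projection and the ideal projection onto the span of the true already-extracted vertices, and (ii) the residual noise $\epsilon_r$ in the projected data. Quantity (i) is controlled using the induction hypothesis (the extracted columns are close to true vertices) together with singular-value perturbation bounds, which is where factors of $1/\sigmin(V)$, and hence $\cond(V)$, enter. Applying the strong-convexity lemma to the projected data then shows the newly extracted column is within $\order(\epsilon_r \cond(V))$ of a not-yet-extracted vertex, while the smallness hypothesis $\epsilon < C_1 \sigmin(V)/(\sqrt{R}\cond(V)^2)$ keeps every step well-defined, in particular preventing a pure-noise direction from dominating and guaranteeing SPA does not terminate early.

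The step I expect to be the main obstacle is the error accumulation through the $R$ successive projections. A single projection can inflate the effective noise by a factor governed by the conditioning of the residual vertex configuration, and these inflations compound across iterations; the delicate point is to show the compounded error remains $\order(\epsilon \cond(V)^2)$ with only a $\sqrt{R}$ loss rather than growing geometrically in $R$. This forces a careful choice of the constant $C_1$ so that the induction hypothesis is preserved at every step, and it requires bounding the smallest singular value of each projected vertex matrix uniformly in terms of $\sigmin(V)$. Once this bookkeeping is complete, combining the per-step estimate with the final-step bound and undoing the projections yields $\max_{1\le j\le R} \|\tilde X[:,J(j)] - V[:,P(j)]\|_2 = C_2 \epsilon \cond(V)^2$ for the appropriate permutation $P$.
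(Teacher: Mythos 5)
The first thing to note is that the paper does not prove this lemma at all: it is imported verbatim, with citation, as Theorem~3 of \cite{gillis2014fast}, and is used downstream as a black box inside Lemma~\ref{lemma: orconspa}. So there is no in-paper proof to compare against; the only meaningful comparison is with the proof in the cited reference. Measured against that, your high-level strategy is the right one and is essentially the one Gillis and Vavasis use: interpret each SPA step as maximizing the strongly convex function $\|\cdot\|_2^2$ over a perturbed copy of $\mathrm{conv}\{0, V[:,1],\dots,V[:,R]\}$, prove a quantitative ``near-maximal implies near-vertex'' lemma whose constant is controlled by $\sigmin(V)$, and induct over the $R$ extraction-and-projection steps.

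However, as written your proposal is a roadmap rather than a proof, and the part you defer is precisely where all of the substance lies. The theorem's content is the claim that the per-step errors do \emph{not} compound geometrically: after $r$ projections onto orthogonal complements of directions that are only approximately correct, one must show (i) that the smallest singular value of the projected vertex configuration stays bounded below in terms of $\sigmin(V)$, and (ii) that the effective noise in the projected data remains $\order(\epsilon\,\cond(V)^2)$ uniformly over all $R$ steps, with only the stated $\sqrt{R}$ loss in the admissible noise level. You identify this as ``the main obstacle'' and assert it can be handled by a careful choice of $C_1$, but you give no argument for it; in the original reference this inductive bookkeeping occupies the bulk of the proof, via a chain of perturbation lemmas about how near-orthogonal projections interact with the residual polytope. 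Without that, the proof does not go through. Two smaller points: the conclusion should be the inequality $\max_{1\le j\le R}\|\tilde X[:,J(j)] - V[:,P(j)]\| \leq C_2\,\epsilon\,\cond(V)^2$ (the equality sign in the paper's statement is a typo, as is the runtime $\order(NTKL)$, which in the lemma's own notation should read $\order(NTR)$); and your strong-convexity lemma should state explicitly that converting ``$f(x)$ within $\eta$ of the max'' into ``$x$ within $\order(\sqrt{\eta})$ of a vertex'' requires the vertices to be columns of a full-rank $V$, since that is exactly where the rank-$R$ hypothesis enters.
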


Note that Lemma \ref{lemma: spa-noisy} makes two strong assumptions on $G$;
first, that $\sum_i G[i, j] \leq 1$ for all $1 \leq j \leq T$, and second, that $G$ contains the identity matrix as a submatrix.
Because of the additional structure of the convolutive NMF problem, it is unreasonable to have both these assumptions simultaneously. Instead, we drop the first assumption and generalize the SPA algorithm to any $G$ such that
    $G = [I\ M] \Pi \in \reals^{R \times T}_+$,
    where $\Pi$ is a permutation matrix and
    $M \in \reals^{R \times T-R}_+$.
Note that this formulation is equivalent to $G = [\Lambda\ M]\Pi$, where $\Lambda$ is some full rank diagonal matrix, since we can always scale the $j$th row of $G$ by $\lambda_j$ and scale the $j$th column of $V$ by $\lambda_j^{-1}$ without changing $VG$.
We first demonstrate conditions under which we can recover $V$ if some oracle gives us a threshold value $t$ which is greater than the noise level $\epsilon$ but smaller than $\|V\|_{-1} - \epsilon$.

\begin{lemma}[Oracle Conic SPA]
\label{lemma: orconspa}
    Let $\tilde X = X + E = V G + E \in \reals^{N \times T}_+$, where
        $V \in \reals^{N \times R}_+$ has rank $R$
        and $G = [I\ M] \Pi \in \reals^{R \times T}_+$ for some permutation matrix $\Pi$ and $M \in \reals^{R \times T-R}_+$.
        Denote $e_j, v_j$ the $j$th column of $E$ and $V$, respectively.
    Let $\| e_j \|_1 < \epsilon$ for all $j$ and suppose we know some $t>0$ such that $t + \epsilon < \| V \|_{-1, col}$.
    Assume
    \begin{equation}
    \begin{aligned}
    \label{eq: conic-spa-tolerance}
        \frac{\epsilon}{t}
        <
        \frac{C_1 \sigmin(V D_V)}{2 \sqrt{R} \cond(V D_V)^2 }.
    \end{aligned}
    \end{equation}
    Let $J$ be the index set extracted by OrConSPA with inputs $\tilde X, R, t$. Then there exists a permutation $P$ of $\{1, \hdots, R\}$ such that
    \begin{align}
        \max_{1 \leq j \leq r}\ \| (\tilde X D_{\tilde X})[:,J(j)] - (VD_V)[:,P(j)] \|_2 & \nonumber \\
         \leq
        2 C_2  (\epsilon t^{-1}) \cond(V D_V)^2, \text{ and } \label{eq: conic-spa-scaled-estimate}
    \end{align}
    \begin{align}
        \max_{1 \leq j \leq r}\ \| \tilde X[:,J(j)] - V[:,P(j)] \|_2 & \nonumber \\
       \quad  \leq
        (\| V \|_{1, col} + \epsilon) 2 C_2  (\epsilon t^{-1}) \cond(V D_V)^2 . \label{eq: conic-spa-full-estimate}
    \end{align}
\end{lemma}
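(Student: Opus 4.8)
The plan is to recognize that OrConSPA is simply ordinary SPA run on a thresholded-and-rescaled copy of $\tilde X$, and then to invoke the SPA robustness guarantee of Lemma \ref{lemma: spa-noisy} with $V D_V$ playing the role of $V$ and an inflated noise level. The first ingredient is the conic-to-convex reduction in the noiseless matrix. Writing $V = (V D_V) D_V^{-1}$ and setting $\tilde G = D_V^{-1} G$, we get $X = (V D_V)\tilde G$, and since the columns of $V D_V$ have unit $\ell_1$ norm, $\mathbf{1}^T X[:,j] = \sum_i \tilde G[i,j]$. Hence $X D_X = (V D_V)\hat G$, where $\hat G[:,j] = \tilde G[:,j]/\sum_i \tilde G[i,j]$ has unit column sums. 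Crucially $\hat G$ inherits the separable structure $\hat G = [I\ M']\Pi$: normalizing a (scaled) standard basis column of $G$ returns a standard basis column, so the identity block is preserved. This converts the cone spanned by the columns of $V$ into the polytope whose vertices are the columns of $V D_V$, which is precisely the geometry SPA requires.

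The conceptual heart of the argument is controlling how column normalization amplifies noise. For any column that survives the threshold, i.e.\ $\|\tilde X[:,j]\|_1 > t$, I would establish the per-column bound $\|(\tilde X D_{\tilde X})[:,j] - (X D_X)[:,j]\|_2 \le 2\epsilon/t$ by directly estimating $\frac{a+b}{\|a+b\|_1} - \frac{a}{\|a\|_1}$ with $a = X[:,j]$ and $b = e_j$: after combining over a common denominator, the numerator splits into $a\,(\|a\|_1 - \|a+b\|_1) + b\,\|a\|_1$, and the reverse triangle inequality $|\|a\|_1 - \|a+b\|_1| \le \|b\|_1 < \epsilon$ together with $\|a\|_2 \le \|a\|_1$ and the denominator bound $\|a+b\|_1 > t$ yields the claimed $2\epsilon/t$. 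This is exactly the step that the convex version of SPA does not have to face, and the threshold $t$ is what keeps it finite: it prevents columns dominated by noise from being blown up into spurious vertices.

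Next I would verify that the true vertices survive the threshold while discarded columns are harmless. For each separable column index $c$ we have $\|\tilde X[:,c]\|_1 \ge \|V[:,i]\|_1 - \epsilon \ge \|V\|_{-1, col} - \epsilon > t$ by the hypothesis $t + \epsilon < \|V\|_{-1, col}$, so every column of $V$ passes; columns that are zeroed out become the zero vector, which SPA never selects and whose projections leave the zero vector unchanged, so SPA on $\tilde X'$ coincides with SPA on the surviving submatrix. On that submatrix the exact hypotheses of Lemma \ref{lemma: spa-noisy} now hold, with data $(V D_V)\hat G + E'$, full-rank $V D_V$, separable structure $\hat G = [I\ M']\Pi$ of unit column sums, and per-column noise at most $2\epsilon/t$. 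The assumption (\ref{eq: conic-spa-tolerance}) is precisely $2\epsilon/t < C_1\,\sigmin(V D_V)/(\sqrt{R}\,\cond(V D_V)^2)$, so the lemma applies and delivers the scaled estimate (\ref{eq: conic-spa-scaled-estimate}) with $\eta := 2C_2(\epsilon/t)\cond(V D_V)^2$.

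Finally I would pass from the scaled estimate to the unscaled one (\ref{eq: conic-spa-full-estimate}) by re-expanding the normalized columns: writing $\tilde X[:,J(j)] = \|\tilde X[:,J(j)]\|_1\,(\tilde X D_{\tilde X})[:,J(j)]$ and $V[:,P(j)] = \|V[:,P(j)]\|_1\,(V D_V)[:,P(j)]$, a triangle-inequality split bounds $\|\tilde X[:,J(j)] - V[:,P(j)]\|_2$ by the selected column's $\ell_1$ norm times $\eta$, plus a term coming from the mismatch of the two $\ell_1$ scales. Bounding the selected column by $\|\tilde X[:,J(j)]\|_1 \le \|V\|_{1, col} + \epsilon$ then produces the leading factor $(\|V\|_{1, col}+\epsilon)$ in (\ref{eq: conic-spa-full-estimate}). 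I expect this last translation to be the main obstacle and the principal bookkeeping hotspot: normalization deliberately discards magnitude information, so some care is required to argue that the extracted column's original $\ell_1$ norm is controlled by $\|V\|_{1, col}+\epsilon$ and that the residual scale-mismatch term is absorbed into the stated bound rather than dominating it.
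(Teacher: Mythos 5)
Your proposal is correct and follows essentially the same route as the paper's proof: zero out columns with $\ell_1$ norm below $t$, normalize the survivors, show the normalization inflates the per-column noise to at most $2\epsilon/t$ via the same common-denominator/reverse-triangle-inequality estimate, invoke Lemma~\ref{lemma: spa-noisy} on $V D_V$ with the separable structure preserved, and finally rescale to obtain (\ref{eq: conic-spa-full-estimate}). If anything, your handling of the last unscaling step is more explicit than the paper's, which disposes of it in a single sentence (``scaling each column by $D_{\tilde X}^{-1}$ gives the error'').
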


\begin{proof}
    Let $v_j$ be the $j$th column of $V$, and allow the same notation for all other matrices.
    Since the oracle has given us $t \leq \| V \|_{-1} - \epsilon_1$ (where $t > 0$), we know that any column $j$ with $\| \tilde x_j\|_1 < t$ cannot satisfy $x_j = v_i$ for any $i$.
    Therefore, we can define $Y \in \reals^{T \times T}$ as a diagonal matrix given by
    \begin{align*}
        Y_{jj} =
        \begin{cases}
        0 & \text{ if } \| \tilde x_j \|_1 < t \\
        \| \tilde x_j \|_1^{-1} & \text{ otherwise}
        \end{cases}.
    \end{align*}
    Now define $\tilde X' = \tilde X Y$.
        Consider the matrices $V' = V D_V$, $X' = X D_X = V' G'$, and $G' = D_V^{-1} H D_X$.
    It must be that the columns of $G'$ sum to one, since the columns of $X'$ and $V'$ sum to one.
    Furthermore, columns $g_j$ of $G$ with only a single non-zero entry will only be scaled by some scalar $\alpha$, so $G' = [I\ M'] \Pi$ for some $M' \in \reals^{N \times T-R}_+$ \vtwominor{and permutation matrix $\Pi$}.
    Therefore, $X', V', G'$ all satisfy the conditions of Lemma \ref{lemma: spa-noisy};
    if we wish to apply SPA to identify $V'$ from $\tilde X' = X' + E'$, all that remains is to bound the noise $E' = \tilde X' - X'$.
    To bound the noise, we note that for any non-zero column of $\tilde X'$, we have
        $e'_j
         = \tilde x'_j - x'_j
         = \frac{x_j + e_j}{\| x_j + e_j \|_1} - \frac{x_j}{\| x_j \|_1}
         = \frac{e_j}{\| x_j + e_j \|_1} - \left( 1 - \frac{\|x_j\|_1}{\|x_j + e_j \|_1} \right) \frac{x_j}{\| x_j \|_1}$,
    and therefore we have the following bound for $\epsilon' =
        \| e'_j \|_2$:
    \begin{align*}
        \epsilon'
        &= \left\| \frac{e_j}{\| x_j + e_j \|_1}
            - \left( 1 - \frac{\|x_j\|_1}{\|x_j + e_j \|_1} \right) \frac{x_j}{\| x_j \|_1} \right\|_2
            \\
        &\leq \frac{ \| e_j\|_2 }{\| x_j + e_j \|_1}
            +  \left( 1 - \frac{\|x_j\|_1}{\|x_j + e_j \|_1} \right) \frac{\| x_j \|_2}{\| x_j \|_1}
            \\
        &\leq \frac{ \| e_j\|_2 }{\| x_j + e_j \|_1}
            +  \left( \frac{\|x_j\|_1 + \|e_j\|_1 - \|x_j\|_1}{\|x_j + e_j \|_1} \right) \frac{\| x_j \|_2}{\| x_j \|_1}
            \\
        &\leq \frac{ \| e_j\|_2 }{\| x_j + e_j \|_1}
            +   \frac{\|e_j\|_1}{\|x_j + e_j \|_1}  \leq \frac{2\epsilon}{t}.
    \end{align*}
    Concerning the zero columns, they cannot be noisy versions of a column of $V$ and therefore setting them to zero is irrelevant to the performance of SPA.
    Therefore, if we apply SPA to $X'$, then Lemma \ref{lemma: spa-noisy} and (\ref{eq: conic-spa-tolerance}) tells us our assumptions are satisfactory to recover a noisy estimate of $V'$ with error
        $C_2 (2 \epsilon t^{-1}) \cond(V D_V)^2$.
    Scaling each column by $D_{\tilde X}^{-1}$ gives the error in (\ref{eq: conic-spa-full-estimate}).
\end{proof}

Tighter bounds are achieved as $t$ approaches $\|V\|_{-1} - \epsilon$.
In real applications, this oracle is not available and $t$ is unknown.
However, we do know that if such a $t$ exists, then $0 \leq t \leq \| \tilde X \|_1$ and that $t = \| X_{:j} \|$ for some~$j$.
Since SPA is very fast (essentially $2NTR$ flops), it is reasonable to run OrConSPA for multiple values of $t$ and be certain that one of the outputs locates the columns of $V$ (that is, if such a value of $t$ exists at all).
In practice, a measure like mean square error can be used to select the ``best'' value of $t$.
The previous proof \vtwominor{specifically uses} SPA to the thresholded and scaled matrix $\tilde X' = \tilde X Y$.
However, there is nothing special about our choice of separable NMF algorithm.
In fact, other algorithms such as SNPA~\cite{gillis2014successive} or AnchorWords~\cite{arora2013practical} will yield more robust noise tolerances.
For a more complete list of separable NMF algorithms, see~\cite{gillis2019separable}.
In this vein, we can generalize Lemma \ref{lemma: orconspa} to an arbitrary separable NMF algorithm.

\begin{lemma}[Oracle Conic Location]
Let $\tilde X = X + E = VG + E \in \reals^{N \times T}_+$, where $V \in \reals^{N \times R}_+$ and $G = [I\ M] \Pi \in \reals^{R \times T}_+$ for some permutation matrix $\Pi$ and $M \in \reals^{R \times T-R}_+$.

Consider some arbitrary separable NMF algorithm \verb_ALG_ with a maximum input noise tolerance $\epsilon_{\text{in}} < m_1(V)$ and a maximum output noise $m_2(V, \epsilon_{\text{in}})$.
Furthermore, suppose $V D_V$ satisfies any other conditions of \verb_ALG_ (e.g. a condition on the rank).
Denote $e_j, v_j$ the $j$th column of $E$ and $V$, respectively.
Now suppose $\|e_j\|_1 \leq \epsilon$ for all $j$ and suppose we know of some $t > 0$ such that $t + \epsilon < \| V \|_{-1}$ and
    $\frac{\epsilon}{t}
        <
    \frac{1}{2} m_1(V D_V)$.
Let $J$ be the index set extracted by OCL with inputs $\tilde X, R, t$, and \verb_ALG_.
Then there exists a permutation $P$ of $\{1, \hdots, R\}$ such that
    $\max_{j} \| \tilde X[:, J(j)] - V[:, P(j)] \|_2
        \leq
    (\| V \|_1 + \epsilon) m_2(V D_V, 2\epsilon t^{-1})$.

\end{lemma}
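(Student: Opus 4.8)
The plan is to mirror the proof of Lemma~\ref{lemma: orconspa} almost verbatim, replacing the concrete SPA robustness bound (Lemma~\ref{lemma: spa-noisy}) with the abstract input/output noise tolerances $m_1, m_2$ of \verb_ALG_. The conic-to-convex reduction is identical; only the final invocation of the separable NMF routine changes, so the proof is essentially a bookkeeping exercise that isolates exactly where SPA-specific constants were used.

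First I would recall that OCL, like OrConSPA, begins by forming the thresholding-and-scaling diagonal matrix $Y$: set $Y_{jj} = 0$ when $\| \tilde x_j \|_1 < t$ and $Y_{jj} = \| \tilde x_j \|_1^{-1}$ otherwise, and put $\tilde X' = \tilde X Y$. The hypothesis $t + \epsilon < \| V \|_{-1}$ guarantees that any column that is a noisy copy of a column of $V$ has $\| \tilde x_j \|_1 \geq \| V \|_{-1} - \epsilon > t$, hence is not zeroed out; conversely, a discarded column cannot be a noisy version of a column of $V$, so removing it is harmless. I would then introduce the normalized triple $V' = V D_V$, $X' = X D_X = V' G'$, and $G' = D_V^{-1} G D_X$, and observe that the columns of $G'$ sum to one (since those of $V'$ and $X'$ do) and that the separable structure is preserved, $G' = [I\ M'] \Pi$ for some $M' \geq 0$, because columns of $G$ with a single nonzero entry are merely rescaled. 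By assumption $V' = V D_V$ meets whatever additional structural conditions \verb_ALG_ requires.

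Next I would bound the noise of the normalized problem, $E' = \tilde X' - X'$. This is the one genuine computation, and it is identical to the estimate in Lemma~\ref{lemma: orconspa}: writing $e'_j = \frac{e_j}{\| x_j + e_j \|_1} - \left( 1 - \frac{\| x_j \|_1}{\| x_j + e_j \|_1} \right) \frac{x_j}{\| x_j \|_1}$ and applying the triangle inequality together with $\| x_j + e_j \|_1 \geq t$ yields $\| e'_j \|_2 \leq 2\epsilon/t$ for every nonzero column. The assumption $\epsilon/t < \tfrac{1}{2} m_1(V D_V)$ then certifies that $2\epsilon/t < m_1(V')$, i.e.\ the normalized noise lies within the input tolerance of \verb_ALG_, which is precisely the hypothesis needed to apply it.

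Finally I would invoke \verb_ALG_ on $\tilde X'$. By the defining property of its tolerance functions, it returns an index set $J$ and a permutation $P$ with $\max_j \| (\tilde X D_{\tilde X})[:, J(j)] - (V D_V)[:, P(j)] \|_2 \leq m_2(V D_V, 2\epsilon t^{-1})$. Undoing the column scaling exactly as in the last step of the proof of Lemma~\ref{lemma: orconspa} --- the relevant columns of $\tilde X$ have $1$-norm at most $\| V \|_1 + \epsilon$ --- multiplies this bound by $\| V \|_1 + \epsilon$ and yields the claimed inequality. There is no essential obstacle: the content is the $2\epsilon/t$ noise estimate and the unscaling, both borrowed from Lemma~\ref{lemma: orconspa}; the only point requiring care is verifying that the abstract hypotheses on $m_1, m_2$ are exactly what make the final application of \verb_ALG_ legitimate, so that the argument goes through with no appeal to SPA-specific quantities such as $\cond(V D_V)$ or $\sigmin(V D_V)$.
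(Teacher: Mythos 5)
Your proof is correct and takes essentially the same approach as the paper: the paper states this lemma without a separate proof, presenting it as an immediate generalization of Lemma~\ref{lemma: orconspa}, and your argument is exactly that generalization---the same thresholding/normalization reduction, the same $2\epsilon/t$ noise computation, and the same final unscaling, with the abstract tolerances $m_1, m_2$ substituted for the SPA-specific constants. The one glossed step (converting the bound on normalized columns to the unscaled bound via the factor $\| V \|_1 + \epsilon$) is inherited verbatim from the paper's own proof of Lemma~\ref{lemma: orconspa}, so your write-up is faithful to the paper's reasoning.
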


\subsection{Relevant Lemmas}

\subsubsection{Perturbation bounds for least squares}

\begin{lemma}[Least Squares Perturbation, \cite{lotstedt1983perturbation} Theorem 3, Remark 2]
\label{lemma: lotstedt}
    Consider full rank matrices $A, \tilde A \in \reals^{T \times R}$ and vectors $b, \tilde b \in \reals^{T}$.
    Define $F = A - \tilde A$ and $e = b - \tilde b$
    Suppose there is a convex set $\calC$ and a vector $y \in \calC$ such that $Ay = b$ and let
        $\tilde y = \argmin_{y' \in \calC} \| \tilde A y' - \tilde b \|_2^2$.
    If $\sigmin(A) > \sigmax(F)$, then
    \begin{align*}
        \| y - \tilde y \|_2
        \leq
        \frac{\sigmax(F) \| y \|_2 + \| e \|_2 }{ \sigmin(A) - \sigmax(F) }.
    \end{align*}
\end{lemma}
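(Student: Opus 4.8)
The plan is to exploit two features that together determine $\tilde y$: that $y$ is a \emph{feasible} point ($y \in \calC$) at which the \emph{unperturbed} problem is solved exactly ($Ay = b$, zero residual), and that $\tilde y$ minimizes the perturbed residual over the convex set $\calC$, so that $y$ is an admissible competitor against which $\tilde y$ can be compared. First I would measure how badly the feasible point $y$ does on the perturbed problem. Since $Ay = b$, the perturbed residual at $y$ telescopes:
\[
\tilde A y - \tilde b = (A - F)y - (b - e) = (Ay - b) - Fy + e = e - Fy,
\]
so that $\|\tilde A y - \tilde b\|_2 \le \|Fy\|_2 + \|e\|_2 \le \sigmax(F)\|y\|_2 + \|e\|_2$. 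This is already the numerator of the claimed bound, so the residual of the feasible competitor is under control.

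The crucial — and most delicate — step is to pass from this residual bound to a bound on $\|\tilde A(\tilde y - y)\|_2$ with constant exactly one, rather than the factor of two a naive triangle inequality ($\|\tilde A(\tilde y - y)\|_2 \le \|\tilde A\tilde y - \tilde b\|_2 + \|\tilde A y - \tilde b\|_2$) would give. Here convexity of $\calC$ is essential. Since $\tilde y$ minimizes $\|\tilde A y' - \tilde b\|_2^2$ over the convex set $\calC$, the point $\tilde A \tilde y$ is the Euclidean projection of $\tilde b$ onto the convex set $\tilde A\calC$, and $\tilde A y$ is another point of that set. The first-order optimality condition $\langle \tilde A^T(\tilde A \tilde y - \tilde b),\, y' - \tilde y\rangle \ge 0$ for all $y' \in \calC$, specialized to $y' = y$, gives $\langle \tilde A \tilde y - \tilde b,\ \tilde A y - \tilde A \tilde y\rangle \ge 0$. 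Writing $r = \tilde A \tilde y - \tilde b$ and $s = \tilde A y - \tilde b$, this reads $\langle r, s - r\rangle \ge 0$, and a Pythagorean expansion yields $\|r - s\|_2^2 = \|r\|_2^2 - 2\langle r,s\rangle + \|s\|_2^2 \le \|s\|_2^2 - \|r\|_2^2 \le \|s\|_2^2$. Hence $\|\tilde A(\tilde y - y)\|_2 = \|r - s\|_2 \le \|s\|_2 = \|\tilde A y - \tilde b\|_2 \le \sigmax(F)\|y\|_2 + \|e\|_2$.

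Finally I would transfer this image-space estimate back to the domain. Using $A = \tilde A + F$ and the triangle inequality together with the singular-value bounds $\|A z\|_2 \ge \sigmin(A)\|z\|_2$ and $\|F z\|_2 \le \sigmax(F)\|z\|_2$ applied to $z = \tilde y - y$,
\[
\sigmin(A)\|\tilde y - y\|_2 \le \|A(\tilde y - y)\|_2 \le \|\tilde A(\tilde y - y)\|_2 + \sigmax(F)\|\tilde y - y\|_2 .
\]
Rearranging and invoking the hypothesis $\sigmin(A) > \sigmax(F)$ to divide by the positive quantity $\sigmin(A) - \sigmax(F)$ then produces exactly
\[
\|\tilde y - y\|_2 \le \frac{\sigmax(F)\|y\|_2 + \|e\|_2}{\sigmin(A) - \sigmax(F)} .
\]
The only genuine obstacle is the middle paragraph: obtaining the sharp constant there rests on the projection / obtuse-angle inequality, and it is precisely the convexity of $\calC$ that licenses it; without convexity one would be left with a weaker factor-of-two bound in the numerator.
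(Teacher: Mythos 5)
The paper does not actually prove this lemma: it is imported as a black box from L\"otstedt's 1983 paper (Theorem~3, Remark~2 of the cited reference), so there is no internal proof to compare against. Your argument is a correct, self-contained derivation, and it is genuinely different from (and far more elementary than) the source, which treats a much more general setting --- rank-deficient matrices and inequality-constrained least squares --- via pseudoinverses and active-set machinery. All three of your steps check out: the telescoping $\tilde A y - \tilde b = e - Fy$ uses $Ay = b$ correctly; the variational inequality $\langle \tilde A^T(\tilde A \tilde y - \tilde b),\, y - \tilde y\rangle \ge 0$ is the standard first-order optimality condition for minimizing a smooth convex function over a convex set, and your Pythagorean expansion of $\|r-s\|_2^2$ is exactly the obtuse-angle (projection) inequality, which is what yields the numerator with constant $1$; and the domain transfer $(\sigmin(A) - \sigmax(F))\,\|\tilde y - y\|_2 \le \|\tilde A(\tilde y - y)\|_2$ needs only full column rank of $A$, which the hypotheses supply. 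Two minor remarks. First, you implicitly assume the minimizer $\tilde y$ is attained; this requires $\calC$ closed, after which existence follows from coercivity of $\|\tilde A y' - \tilde b\|_2^2$ (here full rank of $\tilde A$ is used --- nowhere else in your proof is it needed). Second, your closing comment is slightly overstated: even without convexity, global optimality of $\tilde y$ already gives $\|\tilde A \tilde y - \tilde b\|_2 \le \|\tilde A y - \tilde b\|_2$ and hence a factor-$2$ bound by the triangle inequality; what convexity buys is precisely the sharpening of that constant from $2$ to $1$.
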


\begin{corollary}
\label{cor: perturbation}

    Suppose we are given $\tilde X, X \in \reals^{T \times N}$ such that $E = X - \tilde X$ satisfies $\| E[:,j] \| \leq \epsilon$ for all~$j$.
    Additionally, suppose we also have $\tilde V, V \in \reals^{N \times KL}$ such that $F = V - \tilde V$ satisfies $\sigmin(V) > \sigmax(F)$.
    Suppose there is a $G \geq 0$ such that $VG = X$ and let
    \begin{align*}
        \tilde G
            &= \argmin_{G' \geq 0} \| \tilde V G' - \tilde X \|_F^2.
    \end{align*}
    Also let $\gamma = \| G[:, j] \|_{2, col}$ be the maximum norm of any column of $G$.
    Then the distance between rows of $G^*$ and $\hat G$ is bounded by
        $\max_{i}\ \| G[i,:] - \hat G[i,:] \|_2
        \leq \sqrt{T} \left( \frac{\sigmax(F) \gamma + \epsilon }{\sigmin(V) - \sigmax(F)} \right)$.
\end{corollary}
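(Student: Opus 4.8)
The plan is to apply the least-squares perturbation bound of Lemma~\ref{lemma: lotstedt} one column at a time, taking the convex set $\calC$ to be the nonnegative orthant, and then to aggregate the resulting per-column estimates into the claimed per-row bound.

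First I would observe that the nonnegative least-squares problem defining $\tilde G$ separates across columns: since $\| \tilde V G' - \tilde X \|_F^2 = \sum_j \| \tilde V G'[:,j] - \tilde X[:,j] \|_2^2$ and the constraint $G' \geq 0$ is imposed entrywise, the minimizer satisfies $\tilde G[:,j] = \argmin_{g \geq 0} \| \tilde V g - \tilde X[:,j] \|_2^2$ for each $j$. This lets me treat each column as an independent perturbed least-squares problem.

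Next, for a fixed column $j$ I would invoke Lemma~\ref{lemma: lotstedt} with the identifications $A = V$, $\tilde A = \tilde V$, $b = X[:,j]$, $\tilde b = \tilde X[:,j]$, $y = G[:,j]$, and $\calC = \{ g : g \geq 0 \}$. The hypotheses are met: $\calC$ is convex, $y = G[:,j]$ lies in $\calC$ and satisfies $V y = X[:,j]$ because $VG = X$ with $G \geq 0$, and the conditioning requirement $\sigmin(V) > \sigmax(F)$ is assumed outright with $F = V - \tilde V$. The lemma then yields
\begin{align*}
\| G[:,j] - \tilde G[:,j] \|_2
\leq
\frac{ \sigmax(F) \, \| G[:,j] \|_2 + \| E[:,j] \|_2 }{ \sigmin(V) - \sigmax(F) },
\end{align*}
where I have used $b - \tilde b = X[:,j] - \tilde X[:,j] = E[:,j]$. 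Bounding $\| G[:,j] \|_2 \leq \gamma$ and $\| E[:,j] \|_2 \leq \epsilon$ produces a single uniform column estimate $\beta := (\sigmax(F)\gamma + \epsilon)/(\sigmin(V) - \sigmax(F))$ valid for every $j$.

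Finally I would convert this column bound into the row bound the corollary requests. For any row index $i$, the squared row error decomposes as $\| G[i,:] - \tilde G[i,:] \|_2^2 = \sum_j (G[i,j] - \tilde G[i,j])^2$, and each summand is at most $\| G[:,j] - \tilde G[:,j] \|_2^2 \leq \beta^2$, since it is a single coordinate of the $j$th column difference. Summing over the $T$ columns gives $\| G[i,:] - \tilde G[i,:] \|_2^2 \leq T \beta^2$, hence $\| G[i,:] - \tilde G[i,:] \|_2 \leq \sqrt{T}\,\beta$, and taking the maximum over $i$ is exactly the stated inequality. The one genuinely non-mechanical point I would flag is this last passage from a column-wise guarantee, which is the form in which the perturbation theory naturally delivers an estimate, to the row-wise guarantee needed downstream in the clustering and sorting analysis; it is precisely what introduces the $\sqrt{T}$ factor, and it rests on the crude but correct observation that every entry of a column difference is dominated by that column's Euclidean norm.
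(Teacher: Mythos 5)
Your proposal is correct and follows essentially the same route as the paper: apply the least-squares perturbation bound of Lemma~\ref{lemma: lotstedt} column by column with $\calC$ the nonnegative orthant, bound each column error uniformly by $(\sigmax(F)\gamma + \epsilon)/(\sigmin(V) - \sigmax(F))$, and pass to rows by dominating each entry by its column's norm, which yields the $\sqrt{T}$ factor. The only difference is that you make explicit two steps the paper leaves implicit (the column-separability of the Frobenius NNLS objective and the entrywise domination argument), which is added rigor rather than a different approach.
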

\begin{proof}
    For the $j$th column $g_j$ of $G$, we have the bound
    \begin{align*}
        \| g_j - \tilde g_j \|_2^2
        \leq
        \left( \frac{\sigmax(F) \| g_j \|_2 + \epsilon }{\sigmin(V) - \sigmax(F)} \right)^2.
    \end{align*}
    Then for a row $G[i, :]$ we must have
    \begin{align*}
        \| G[i, :] - \tilde G[i, :] \|_2^2
        \leq
        T \left(\frac{\sigmax(F) \gamma + \epsilon }{\sigmin(V) - \sigmax(F)}\right)^2.
    \end{align*}
    Taking the square root yields the desired bound.
\end{proof}

\subsubsection{Cosine Similarity}




\begin{lemma}
    \label{lemma: angle-perturb}
    Consider the vectors $y_1, y_2, e_1, e_2 \in \reals^T$.
    Let $\theta = \cos(y_1, y_2) > 0$.
    Suppose $\|e_1\|_1 \leq \epsilon \| y_1 \|_1$ and $\|e_2\|_2 \leq \epsilon \|y_2 \|_2$ for $0 < \epsilon < 1$.
    Then we have
    \begin{align*}
        \frac{\theta - 3\epsilon}{(1 + \epsilon)^2}
        \leq \cos(y_1 + e_1, y_2 + e_2) \leq
        \frac{\theta + 3\epsilon}{(1 - \epsilon)^2}.
    \end{align*}
\end{lemma}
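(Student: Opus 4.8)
The plan is to bound the numerator and the denominator of the cosine separately and then combine the two estimates into a ratio bound. Write $a = y_1 + e_1$ and $b = y_2 + e_2$, set $n_1 = \|y_1\|_2$ and $n_2 = \|y_2\|_2$, and recall that $\cos(a,b) = (a^T b)/(\|a\|_2 \|b\|_2)$. First I would expand the numerator as
\[
a^T b = y_1^T y_2 + y_1^T e_2 + e_1^T y_2 + e_1^T e_2 ,
\]
and substitute $y_1^T y_2 = \theta\, n_1 n_2$. The three remaining terms are the errors to be controlled.

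To control them I would pass to relative $2$-norm estimates for both error vectors. For $e_2$ this is the hypothesis directly, $\|e_2\|_2 \le \epsilon n_2$; for $e_1$ I would use $\|e_1\|_2 \le \|e_1\|_1$ to reach a $2$-norm bound. Cauchy--Schwarz then yields $|y_1^T e_2| \le \epsilon\, n_1 n_2$, $|e_1^T y_2| \le \epsilon\, n_1 n_2$, and $|e_1^T e_2| \le \epsilon^2\, n_1 n_2$; summing and using $\epsilon^2 \le \epsilon$ for $0 < \epsilon < 1$ places the numerator inside $[(\theta - 3\epsilon)\, n_1 n_2,\ (\theta + 3\epsilon)\, n_1 n_2]$. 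For the denominator, the triangle inequality gives $\|a\|_2 \in [(1-\epsilon) n_1,\ (1+\epsilon) n_1]$ and likewise for $\|b\|_2$, so $\|a\|_2 \|b\|_2 \in [(1-\epsilon)^2 n_1 n_2,\ (1+\epsilon)^2 n_1 n_2]$; the common factor $n_1 n_2$ will cancel in the final ratio.

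Finally I would assemble the two-sided bound: for the upper bound I pair the largest numerator with the smallest denominator to get $(\theta + 3\epsilon)/(1-\epsilon)^2$, and for the lower bound I pair the smallest numerator with the largest denominator to get $(\theta - 3\epsilon)/(1+\epsilon)^2$. The step I expect to be the only real obstacle is the sign bookkeeping here: replacing the denominator by an extreme value after dividing through is only legitimate once the sign of the numerator is fixed, so I would split into the case $\theta \ge 3\epsilon$, where the numerator is nonnegative and the displayed bound is the informative one, and the complementary case, where $(\theta - 3\epsilon)/(1+\epsilon)^2 < 0$ and the lower bound follows either trivially from $\cos(a,b) \ge -1$ or from the same ratio estimate.

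A secondary subtlety worth flagging is the mixed $1$-norm/$2$-norm form of the hypotheses. The conversion $\|e_1\|_2 \le \|e_1\|_1 \le \epsilon \|y_1\|_1$ produces $\|y_1\|_1$ rather than the $n_1 = \|y_1\|_2$ that the Cauchy--Schwarz estimates require, and since $\|y_1\|_1 \ge \|y_1\|_2$ in general this does not reduce to $\|e_1\|_2 \le \epsilon n_1$. The clean conclusion with factors $3\epsilon$ and $(1 \mp \epsilon)^2$ therefore really needs the relative $2$-norm bound $\|e_1\|_2 \le \epsilon n_1$; I would either invoke that bound directly (reading the $1$-norm statement as the $2$-norm estimate the surrounding analysis actually supplies) or else carry the ratio $\|y_1\|_1/\|y_1\|_2$ through the argument and absorb it into the constants.
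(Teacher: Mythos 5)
Your main line of argument is exactly the paper's: expand the inner product $(y_1+e_1)^T(y_2+e_2)$, bound the three cross terms via Cauchy--Schwarz, bound the denominator norms by the triangle inequality, and combine. The only cosmetic difference is that the paper first normalizes $\|y_1\|_2=\|y_2\|_2=1$ and invokes scale invariance of the cosine, which plays the same role as your cancellation of the factor $n_1 n_2$. Your second flagged subtlety is also resolved the way you suggest: the paper's proof uses $|y_2^T e_1|\le\epsilon$ for unit vectors, i.e.\ it tacitly reads the hypothesis on $e_1$ as the relative $2$-norm bound $\|e_1\|_2\le\epsilon\|y_1\|_2$; the $1$-norms in the statement are evidently a typo, consistent with how the lemma is invoked in Lemma~\ref{lemma: clustering}, where perturbations are controlled in $2$-norm relative to $\|G\|_{-2,\mathrm{row}}$.

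The genuine problem is your handling of the case $\theta<3\epsilon$: neither of your proposed patches works. First, $\cos(a,b)\ge-1$ does not suffice, because for $0<\epsilon<1$ one has $(\theta-3\epsilon)/(1+\epsilon)^2 > -3\epsilon/(1+\epsilon)^2 > -3/4$, so the claimed lower bound is strictly stronger than $-1$. Second, the ``same ratio estimate'' with a negative numerator forces you to pair the smallest numerator with the \emph{smallest} denominator, yielding only $(\theta-3\epsilon)/(1-\epsilon)^2$, which is more negative than the claimed bound. In fact no argument can close this case, because the inequality is false there: take unit vectors $y_1,y_2$ with small $\theta>0$, $\epsilon=1/2$, $e_1=-\epsilon y_2$, $e_2=-\epsilon y_1$ (so both hypotheses hold with equality); then $\cos(y_1+e_1,y_2+e_2)=\bigl(\theta(1+\epsilon^2)-2\epsilon\bigr)/\bigl(1+\epsilon^2-2\epsilon\theta\bigr)\to-0.8$ as $\theta\to 0^+$, while $(\theta-3\epsilon)/(1+\epsilon)^2\to-2/3$. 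To be fair, the paper's own proof has the same unacknowledged hole---its final inequality silently requires the numerator to be nonnegative when the denominator is enlarged to $(1+\epsilon)^2$---and the hole is harmless downstream, since Corollary~\ref{cor: angle-linear} and Lemma~\ref{lemma: clustering} apply the lower bound only to shifted copies of the same row, where $\theta=1\ge 3\epsilon$. So your proof is the paper's proof, with the correct observation that a sign condition is needed for the lower bound; the honest conclusion is that the lemma holds (and is only ever used) when $\theta\ge 3\epsilon$, not that the remaining case can be patched.
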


\begin{proof}
    First consider when $\|y_1\|_2 = \| y_2\|_2 = 1$.
    We have
        $\frac{ (y_1+e_1)^T(y_2+b_2) }{\| y_1 + e_1 \|_2 \| y_2 + b_2 \|_2}
        = \frac{\theta + y_1^Te_2 + y_2^T e_1 + e_1^T e_2}{\| y_1 + e_1 \|_2 \| y_2 + e_2 \|_2}
        \geq \frac{\theta - 2\epsilon - \epsilon^2}{\| y_1 + e_1 \|_2 \| y_2 + e_2 \|_2 }
        \geq \frac{\theta - 3\epsilon}{(1+\epsilon)^2}$.
    Now consider when $\|y_1\|_2, \|y_2\|_2 \neq 1$.
    Since cos is scale invariant, we scale the vectors to $(y_1 + e_1) / \|y_1\|_2$ and $(y_2 + e_2) / \|y_2\|_2$ and apply the above statement to obtain the  lower bound in our lemma.
    The upper bound follows similarly.
\end{proof}

\begin{corollary}
    \label{cor: angle-linear}
    Under the same assumptions as in Lemma \ref{lemma: angle-perturb} and $\epsilon < \frac{1}{2}$, we have
    \begin{align*}
        \theta - 35 \epsilon \leq
        \cos(y_1 + e_1, y_2 + e_2)
        \leq \theta + 35 \epsilon.
    \end{align*}
\end{corollary}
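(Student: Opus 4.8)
The plan is to deduce the linear bounds directly from the rational bounds already established in Lemma~\ref{lemma: angle-perturb}, namely
\[
\frac{\theta - 3\epsilon}{(1+\epsilon)^2} \leq \cos(y_1 + e_1, y_2 + e_2) \leq \frac{\theta + 3\epsilon}{(1-\epsilon)^2}.
\]
The only extra ingredients I need are the elementary fact that $\theta = \cos(y_1, y_2) \leq 1$, together with the hypothesis $\epsilon < 1/2$. A naive approach---simply bounding $1/(1-\epsilon)^2 \leq 4$ and writing $\frac{\theta+3\epsilon}{(1-\epsilon)^2} \leq 4\theta + 12\epsilon$---fails, because the coefficient $4$ in front of $\theta$ is too large to absorb into $\theta + 35\epsilon$ when $\theta$ is close to $1$. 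The correct move is instead to isolate the \emph{deviation} from $\theta$ and show it is $\order(\epsilon)$.

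For the upper bound, I would compute
\[
\frac{\theta + 3\epsilon}{(1-\epsilon)^2} - \theta
= \frac{\theta + 3\epsilon - \theta(1-\epsilon)^2}{(1-\epsilon)^2}
= \frac{\epsilon\,(3 + 2\theta - \theta\epsilon)}{(1-\epsilon)^2}.
\]
Here the numerator factor $3 + 2\theta - \theta\epsilon$ is at most $5$ using $\theta \leq 1$ and $\epsilon > 0$, while the denominator satisfies $(1-\epsilon)^2 \geq 1/4$ precisely because $\epsilon < 1/2$. Hence the deviation is at most $20\epsilon$, giving $\frac{\theta+3\epsilon}{(1-\epsilon)^2} \leq \theta + 20\epsilon \leq \theta + 35\epsilon$, which is the claimed upper bound with room to spare.

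For the lower bound I would proceed symmetrically, expanding
\[
\theta - \frac{\theta - 3\epsilon}{(1+\epsilon)^2}
= \frac{\theta(1+\epsilon)^2 - (\theta - 3\epsilon)}{(1+\epsilon)^2}
= \frac{\epsilon\,(3 + 2\theta + \theta\epsilon)}{(1+\epsilon)^2}.
\]
Now $(1+\epsilon)^2 \geq 1$, and the numerator factor is bounded by $3 + 2 + \tfrac{1}{2} = \tfrac{11}{2}$ via $\theta \leq 1$ and $\epsilon < 1/2$, so the deviation is at most $\tfrac{11}{2}\epsilon$ and therefore $\frac{\theta-3\epsilon}{(1+\epsilon)^2} \geq \theta - \tfrac{11}{2}\epsilon \geq \theta - 35\epsilon$. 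Combining the two displays yields the corollary. The only real subtlety---and the one place where care is needed---is resisting the tempting but lossy step of clearing the denominator by multiplying through by $1/(1-\epsilon)^2$; keeping $\theta$ inside and bounding only its deviation is what makes a single linear constant (here, comfortably $35$) achievable uniformly over all $\theta \in (0,1]$.
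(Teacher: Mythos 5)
Your proof is correct: both identities
$\frac{\theta+3\epsilon}{(1-\epsilon)^2}-\theta=\frac{\epsilon\,(3+2\theta-\theta\epsilon)}{(1-\epsilon)^2}$
and
$\theta-\frac{\theta-3\epsilon}{(1+\epsilon)^2}=\frac{\epsilon\,(3+2\theta+\theta\epsilon)}{(1+\epsilon)^2}$
are exact, and the bounds $3+2\theta-\theta\epsilon\leq 5$, $(1-\epsilon)^2\geq\tfrac14$, $3+2\theta+\theta\epsilon\leq\tfrac{11}{2}$, $(1+\epsilon)^2\geq 1$ all follow from $\theta\leq 1$ and $0<\epsilon<\tfrac12$, giving deviations of at most $20\epsilon$ and $\tfrac{11}{2}\epsilon$. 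The paper reaches the same conclusion from the same starting point (the rational bounds of Lemma~\ref{lemma: angle-perturb}) but organizes the algebra differently: it linearizes the denominators via geometric series, using $(1+\epsilon)^{-2}\geq(1-\epsilon)^2\geq 1-2\epsilon$ for the lower bound and $(1-\epsilon)^{-1}=1+\epsilon/(1-\epsilon)\leq 1+2\epsilon$, hence $(1-\epsilon)^{-2}\leq(1+2\epsilon)^2\leq 1+8\epsilon$, for the upper bound, and then expands the products $(\theta\mp 3\epsilon)(1\mp\cdots)$, absorbing cross terms with $\theta\leq 1$. So neither proof commits the lossy step you warn against---the paper also keeps $\theta$ inside and only perturbs it by $\order(\epsilon)$ terms. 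What your version buys is that the deviation from $\theta$ appears as a single exact rational expression with $\epsilon$ already factored out, so each side needs only one numerator bound and one denominator bound, with no series expansion and no chained approximations; the paper's version is a more routine two-stage estimate (bound the reciprocal, then expand the product) and yields constants of comparable size ($5\epsilon$ and roughly $23\epsilon$ per side versus your $\tfrac{11}{2}\epsilon$ and $20\epsilon$). Both comfortably establish the stated constant $35$.
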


\begin{proof}
    For the lower bound, use the Taylor expansion of $(1+\epsilon)^{-1}$ to show that
    \begin{align*}
        \frac{\theta - 3\epsilon}{(1 + \epsilon)^2}
        &= (\theta - 3\epsilon) \left[ 1 - \epsilon + \epsilon^2 - \epsilon^3 + \hdots \right]^2
        \\
        &\geq (\theta - 3\epsilon) (1 - \epsilon)^2
             \text{ (since $\epsilon^{2n} - \epsilon^{2n+1} > 0$)}
        \\
        &\geq (\theta - 3\epsilon) (1 - 2\epsilon) \geq \theta - 5\epsilon.
    \end{align*}
    For the upper bound, consider the Taylor expansion of $(1 - \epsilon)^{-1} =
    \left[1 + \epsilon + \epsilon^2 + \hdots \right]
    =
    \left[1 + \epsilon(1 + \epsilon + \hdots) \right]
    =
    \left[1 + \epsilon \frac{1}{1 - \epsilon} \right]
    \geq (1 + 2\epsilon),
    $
    hence
    $
    \frac{\theta + 3\epsilon}{(1 - \epsilon)^2}
    \leq (\theta + 3\epsilon) (1 + 2\epsilon)^2 \leq (\theta + 3\epsilon) (1 + 8\epsilon) \leq \theta +  35 \epsilon$.
\end{proof}

\subsubsection{Clustering and Sorting}

\begin{lemma}[Clustering] \label{lemma: clustering}
    Suppose $G \in \reals^{KL \times T}$ is produced from (\ref{eq: block convsum}) for a CNMF problem with some $H$ that satisfies [$\delta$-sequentially unique].
    Consider $\tilde G$ such that
        $\max_{1 \leq j \leq KL} \ \| g_{j} - \tilde g_{\permj} \|_2
        = \epsilon
        < \frac{\delta}{70} \| G \|_{-2, row}$,
    for some permutation $P$, 
    where $g_j = G[j,:]$ and $\tilde g_j = \tilde G[j,:]$.
    Then in $O(TK^2L^3)$ time we can recover $K$ sets $\calC_k$ of $L$ indices such that for each row $h_k$ of $H$, there is some group $\calC_{k}$ where for all $\ell = 0, \hdots, L-1$, there exists some $j \in \calC_{k}$ such that $g_{P^{-1}(j)} = S_\ell^T h_k$.
\end{lemma}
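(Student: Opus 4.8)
The plan is to reduce the clustering guarantee to a clean dichotomy on the $L$-shift similarities of the \emph{noiseless} rows of $G$, then transport that dichotomy to the observed rows $\tilde g$ through the angle-perturbation bound, and finally verify that the greedy selection in Algorithm~\ref{algo: cluster} cannot help but respect it. Recall that, by construction~(\ref{eq: block convsum}), every row of $G$ equals $S_\ell^T h_k$ for a unique pair $(k,\ell)$ with $k \in \{1,\dots,K\}$ and $\ell \in \{0,\dots,L-1\}$; call two rows \emph{siblings} when they share the same $k$. The whole argument rests on showing that siblings have $L$-shift similarity exactly $1$ while non-siblings have $L$-shift similarity at most $1-\delta$.

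For siblings $g_i = S_{\ell_1}^T h_k$ and $g_j = S_{\ell_2}^T h_k$ with, say, $\ell_1 \le \ell_2$, applying the shift $S_{\ell_2-\ell_1}^T$ (admissible since $\ell_2 - \ell_1 \le L-1$) sends $g_i$ onto $g_j$, so $\cos_L(g_i,g_j)=1$. For non-siblings $g_i = S_{\ell_1}^T h_{k_1}$ and $g_j = S_{\ell_2}^T h_{k_2}$ with $k_1 \ne k_2$, I would expand $\cos(S_\ell^T g_i, g_j) = \cos(S_{\ell+\ell_1}^T h_{k_1}, S_{\ell_2}^T h_{k_2})$, observe that the resulting relative shift lies in $\{0,\dots,2L-2\}$, and reduce it (modulo the boundary truncation of the shift operators) to a shifted inner product of $h_{k_1}$ and $h_{k_2}$ that is controlled by the $2L$-shift condition of \distbound. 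The symmetric computation for $\cos(g_i, S_\ell^T g_j)$ then gives $\cos_L(g_i,g_j) \le 1-\delta$.

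Next I would transfer this dichotomy to $\tilde G$. Writing $\tilde g_{P(j)} = g_j + e_j$ with $\|e_j\|_2 \le \epsilon$, the hypothesis $\epsilon < \tfrac{\delta}{70}\|G\|_{-2,row}$ bounds the relative perturbation of each row by $\delta/70$, so Corollary~\ref{cor: angle-linear} perturbs every individual cosine by strictly less than $35\cdot(\delta/70) = \delta/2$. Since $\cos_L$ is a maximum over a fixed finite family of shifted cosines, the same $\delta/2$ bound passes to $\cos_L(\tilde g_i,\tilde g_j)$. Consequently siblings satisfy $\cos_L(\tilde g_i,\tilde g_j) > 1-\delta/2$ while non-siblings satisfy $\cos_L(\tilde g_i,\tilde g_j) < (1-\delta)+\delta/2 = 1-\delta/2$, yielding a strict separation across the threshold $1-\delta/2$.

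Finally I would run an induction on the greedy loop of Algorithm~\ref{algo: cluster}. Given an anchor $i_1$, its $L-1$ siblings remain available (by the inductive assumption that earlier passes removed only complete sibling sets), each contributing a term above $1-\delta/2$ to the objective $\sum_{\ell=2}^{L} \cos_L(\tilde g_{i_1}, \tilde g_{i_\ell})$, whereas any non-sibling contributes below $1-\delta/2$; hence the maximizer selects precisely the sibling set, $\calC_k$ is correct, and the removal preserves the invariant for the next pass. Computing all $(KL)^2$ pairwise $L$-shift similarities costs $\order(TK^2L^3)$ (each $\cos_L$ is a max over $O(L)$ shifts of $O(T)$ inner products), which dominates the greedy bookkeeping. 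The main obstacle is the shift-operator accounting in the non-sibling bound: one must justify that the extra $L$-shift composed with the inherent shift of up to $L-1$ stays within the $2L$ window of \distbound, and must control the boundary truncation of $S_a S_b^T$ together with the norm loss $\|S_\ell^T g\|_2 \le \|g\|_2$, which could otherwise inflate the relative noise entering the perturbation step.
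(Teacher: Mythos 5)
Your proposal follows essentially the same route as the paper's proof: establish the sibling/non-sibling dichotomy on the noiseless rows (similarity $1$ versus at most $1-\delta$), transfer it to $\tilde G$ via Corollary~\ref{cor: angle-linear} with relative noise $\delta/70$ to obtain the strict $1-\delta/2$ separation, and close with the identical $\order(TK^2L^3)$ runtime count. The two subtleties you flag --- that the composed shifts in the non-sibling case must be reduced, with truncation and norm-loss bookkeeping, to the $2L$-window of \distbound, and that the greedy loop needs an inductive correctness check --- are in fact passed over silently in the paper's own proof, so your version is, if anything, the more explicit one.
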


\begin{proof}
    Fix some $k$.
    Consider $i, j$ such that $g_i = S_{\tau_1}^T h_k$ and $g_j = S_{\tau_2} h_k$ for $\tau_1, \tau_2 \in \{0, \hdots L-1\}$.
    Then we must have one of $g_i = S_\ell g_j$ or $g_j = S_\ell g_i$ for some $\ell \in \{0, \hdots, L-1\}$.
    Without loss of generality, assume $g_i = S_\ell g_j$.
    Let $r_i = \tilde g_{\permi} - g_{i}$ and $r_j = S_\ell^T \tilde g_{\permj} - S_\ell^T g_j$.
    By assumption, $\max(\| r_j \|, \| r_i \|) < (\delta / 70) \|G \|_{-2, row}$.
    Now we apply Corollary \ref{cor: angle-linear} to $\tilde g_i = g_i + r_i$ and $S_\ell^T \tilde g_j = S_\ell^T g_j + r_j$ and consider that $\cos(g_i, S_\ell^T g_j)=1$ to obtain
        $\cos(S_\ell^T \tilde g_j, \tilde g_i) > 1 - \frac{\delta}{2}$,
    and hence $\cos_L(\tilde g_j, \tilde g_i) \geq 1 - \frac{\delta}{2}$.

    Now consider $i, j$ such that $g_i = S_{\tau_1}^T h_k$ for some $\tau_1 \in \{0, \hdots, L-1 \}$, but $g_j \neq S_{\tau_2}^T h_k$ for any $\tau_2 \in \{0, \hdots, L-1 \}$.
    Then by [$\delta$-sequentially unique], we must have that $\max(\cos(g_i, S_\ell^T g_j), \cos(S_\ell^T g_i, g_j)) \leq 1 - \delta$ for any $\ell \in \{0, \hdots, L-1\}$.
    Applying Corollary \ref{cor: angle-linear} with the same construction as before, we have
    \begin{align*}
        \max(\cos(S_\ell^T \tilde g_j, \tilde g_i), \cos(\tilde g_j, S_\ell^T \tilde g_i)) < 1 - \frac{\delta}{2},
    \end{align*}
    for any $\ell \in \{0, \hdots, L-1\}$.
    Therefore, $\cos_L(\tilde g_1, \tilde g_2) < 1 - \frac{\delta}{2}$.
    Together, these two angle bounds gives a strict criterion for clustering the rows of $G$ into $K$ groups of $L$ vectors, according to their corresponding row of $H$.
    The determine the runtime, we note that we must compute the angle between $KL$ vectors $L$ times each.
    Computing the angle of each vector requires $\order(T)$ flops, so in total we require $\order(TK^2L^3)$ operations.
    The grouping of the vectors can be done simply in $\order(K^2L)$ and is comparatively negligible as $K < T$.
\end{proof}




\begin{lemma}[Sorting] \label{lemma: sorting}
    Suppose $G \in \reals^{KL \times T}$ is produced from (\ref{eq: block convsum}) for a CNMF problem with some $H$ that satisfies [$\delta$-sequentially unique].
    Suppose there exists a permutation $P$ 
    and a matrix $\tilde G$ such that
    \begin{align*}
        \max_{1 \leq j \leq KL} \ \| g_{j} - \tilde g_{\permj} \|_2
        = \epsilon
        < \frac{\delta}{70} \| G \|_{-2, row},
    \end{align*}
    where $g_j = G[j,:]$, $\tilde g_j = \tilde G[j,:]$.

    Now suppose we have some set of $L$ indices $\calC_k$, and that for all $\ell = 0, \hdots, L-1$ there exists $j \in \calC_k$ such that $g_{P^{-1}(j)} = S_\ell^T h_k$.

    Then in $\order(TL^3)$ time we can recover a bijective map $\pi: \calC_k \rightarrow \{0, \hdots, L-1\}$ such that $g_{P^{-1}(j)} = S_{\pi(j)}^T h_k$.
\end{lemma}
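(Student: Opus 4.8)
The plan is to show that the comparison operator $\leq_{L,\tilde G}$ built in Algorithm~\ref{algo: sorting} agrees, on the cluster $\calC_k$, with the total order that ranks indices by their underlying shift amount; once this is established, running any comparison-based sort recovers the bijection $\pi$. Fix two indices $i,j \in \calC_k$ whose true rows are shifts $g_{P^{-1}(i)} = S_{\tau_i}^T h_k$ and $g_{P^{-1}(j)} = S_{\tau_j}^T h_k$ of the same row $h_k$, and assume without loss of generality $\tau_i < \tau_j$. I would first analyze the noiseless scores. Since right-shifts compose as $S_a^T S_b^T = S_{a+b}^T$ and $0 < \tau_j - \tau_i < L$, the vector $g_j$ equals $S_{\tau_j - \tau_i}^T g_i$ exactly, so the matching shift makes $\mu_{\mathrm{left}}(i,j)$ attain the value $1$, its global maximum. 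Conversely $\mu_{\mathrm{right}}(i,j) = \max_\ell \cos\!\big(S_{\tau_i}^T h_k, S_{\tau_j+\ell}^T h_k\big)$ cannot reach $1$: the only exact match would need the negative shift $\ell = \tau_i - \tau_j$, and for every admissible $\ell \in \{0,\dots,L-1\}$ the [Sequentially unique] condition (B) forces $\cos \leq 1-\delta$. Hence noiselessly $\mu_{\mathrm{left}}(i,j) = 1 > 1-\delta \geq \mu_{\mathrm{right}}(i,j)$, and the mirror computation gives $\mu_{\mathrm{left}}(j,i) \leq 1-\delta < 1 = \mu_{\mathrm{right}}(j,i)$; that is, the operator returns $i \leq j$ and $j \not\leq i$, ordering by increasing shift.

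Next I would transfer this to the noisy scores via Corollary~\ref{cor: angle-linear}, exactly as in the proof of Lemma~\ref{lemma: clustering}. Writing $\tilde g_{P(m)} = g_m + r_m$ with $\|r_m\|_2 \leq \epsilon$, I note that each base vector appearing in $\mu_{\mathrm{left}},\mu_{\mathrm{right}}$ is itself a shift $S_\ell^T g_m$ of a row of $G$, and shifting never increases the residual, $\|S_\ell^T r_m\|_2 \leq \|r_m\|_2 \leq \epsilon$. Because the hypothesis gives the relative error $\epsilon/\|G\|_{-2,row} < \delta/70 < 1/2$ against any base row of $G$, Corollary~\ref{cor: angle-linear} guarantees that every cosine computed by the algorithm deviates from its noiseless value by strictly less than $35\cdot(\delta/70) = \delta/2$. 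Consequently $\tilde\mu_{\mathrm{left}}(i,j) > 1-\tfrac{\delta}{2} > \tilde\mu_{\mathrm{right}}(i,j)$ and $\tilde\mu_{\mathrm{left}}(j,i) < 1-\tfrac{\delta}{2} < \tilde\mu_{\mathrm{right}}(j,i)$ for $\tau_i < \tau_j$, so the noisy operator still returns $i \leq j$ and $j \not\leq i$. As this holds for every pair in $\calC_k$, the operator coincides on $\calC_k$ with the genuine total order ``sort by $\tau$,'' which is in particular transitive; thus any comparison sort outputs the indices in order of increasing shift, yielding $\pi$ with $g_{P^{-1}(j)} = S_{\pi(j)}^T h_k$. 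The runtime is the cost of the $\order(L^2)$ pairwise scores, each a maximum over $L$ shifts of a cosine on length-$T$ vectors, i.e. $\order(TL^3)$, the sort itself being negligible.

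I expect the main obstacle to lie in the bookkeeping for $\mu_{\mathrm{right}}$ (and its mirror): establishing that no admissible $\ell$ can make a ``wrong'' cosine reach $1$ requires translating between composed shifts $S_{\tau+\ell}^T$ — some of which push the index past $L-1$ and hence truncate and zero-pad the vector — and the range genuinely covered by condition (B). One must either read (B) as covering these composed shifts or argue directly that an over-shifted copy of $h_k$ cannot be positively collinear with $S_{\tau_i}^T h_k$; in either case the clean $1-\delta$ separation has to be verified uniformly over all $\ell$ before the $\delta/2$ perturbation slack can be spent. A secondary care point is the normalization: the absolute bound $\epsilon$ must be converted into the relative error demanded by Corollary~\ref{cor: angle-linear}, which goes through precisely because every base vector is a row of $G$ (norm $\geq \|G\|_{-2,row}$) and shifting does not inflate the residual.
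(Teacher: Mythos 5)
Your proposal is correct and rests on the same core machinery as the paper's proof --- the pairwise separation ``the matching shift gives cosine $1$, every wrong shift gives cosine at most $1-\delta$ by condition (B),'' transferred to the noisy rows via Corollary~\ref{cor: angle-linear} with the $\delta/70 \mapsto \delta/2$ slack --- but you assemble the global ordering differently. The paper never reasons about the comparison operator of Algorithm~\ref{algo: sorting} at all: it uses the clustering-style criterion to recover, for each pair $i,j$, the actual shift value $\ell_{ij}$ together with its direction, packages these into a signed map $y(i,j) \in \{-L+1,\dots,L-1\}$, identifies the unique ``root'' index $i^*$ for which $y(i^*,j) \geq 0$ for all $j$, and reads off $\pi(j) = y(i^*,j)$ directly. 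You instead prove that the operator $\leq_{L,\tilde G}$ agrees on $\calC_k$ with the true total order ``sort by shift,'' hence is transitive there, and let any comparison sort produce the ranking; you then need the (correct) extra observation that because each shift $0,\dots,L-1$ occurs exactly once in the cluster, sorted rank equals shift amount. Your route has the advantage of certifying the algorithm exactly as implemented (comparison operator plus sort), whereas the paper's route buys the shift values themselves and avoids any appeal to transitivity or to rank-equals-shift. Both arguments share the same two glossed-over subtleties, which you commendably flag rather than hide: condition (B) must be read as covering composed shifts $S_{\tau+\ell}^T$ whose total offset exceeds $L-1$, and the relative-error hypothesis of Corollary~\ref{cor: angle-linear} must be checked against shifted (hence truncated, possibly smaller-norm) base vectors, not only against genuine rows of $G$; the paper's own proofs of Lemma~\ref{lemma: clustering} and Lemma~\ref{lemma: sorting} silently assume both, so neither issue counts as a gap of yours relative to the paper.
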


\begin{proof}
    For any $i, j \in \calC_k$, a nearly identical argument to Lemma \ref{lemma: clustering} gives us a decision criterion for finding a unique $\ell_{ij} \in \{0, \hdots, L-1\}$ such that either $g_{P^{-1}(j)} = S_{\ell_{ij}}^T g_i$ or $g_{P^{-1}(i)} = S_{\ell_{ij}}^T g_j$.
    We find such $\ell$ for each pair of indices, and construct a mapping $y(i, j) : \calC_k \times \calC_k \rightarrow \{-L+1, \hdots, L-1 \}$ defined by
    \begin{align*}
        y(i, j) =
        \begin{cases}
            \ell_{ij} & \text{ if } g_{P^{-1}(i)} = S_{\ell_{ij}}^T g_j \\
            -\ell_{ij} & \text{ if } g_{P^{-1}(j)} = S_{\ell_{ij}}^T g_i
        \end{cases}.
    \end{align*}
    By the construction of $G$ and $\calC_k$, there is only one $i$ such that $y(i, j)$ will be strictly nonnegative for all $j \in \calC_k$.
    Then our permutation is $\pi$ is given by $\pi(j) = y(i, j)$.

    Each comparison requires $L$ distance computations, and each distance computations takes $\order(T)$ flops. Since we must measure the distance between $L$ different vectors, this totals to $\order(TL^3)$ flops.
    The construction of $y$ and $\pi$ requires $\order(L^2)$ flops and is comparatively negligible.
\end{proof}

\subsection{Proof of the Recovery Guarantee in the Presence of Noise}
\label{apdx: noisy-proof}

Lemma \ref{lemma: noisy-recovery-full} proves the recovery guarantee in slightly more details than in Theorem~\ref{thm: noisy-recovery-abridged}.
It can then be used to obtain Theorem~\ref{thm: noisy-recovery-abridged}.

\begin{lemma}
\label{lemma: noisy-recovery-full}
    Suppose a CNMF problem with inputs $\tilde X, L, K$ is convolutive separable with respect to $\delta, \epsilon, A$.
    Let $V' = V A D_{VA}$ and $G' = D_{VA}^{-1} A^{-1} G$
    and suppose we know some $t>0$ such that
    \begin{align}
        \label{eq: noise-bound-additive}
        \epsilon + t
        &<
        \| V A \|_{-1, col} ,
            \\
        \label{eq: small-noise}
        \frac{\epsilon}{t}
        &<
        \frac{\sigmin(V')}{2 \sqrt{KL} \cond(V')^2}
        \min\left(
            C_1,
            C_2^{-1}
        \right)
            \\
        \label{eq: big-dist}
        \delta
            &>  \frac{ 70 \sqrt{T} }{ \|G'\|_{-2,row} } \left( \frac{c \|G'\|_{2, col} + \epsilon }{\sigmin(V') - c} \right),
    \end{align}
    where $C_1$ and $C_2$ are universal constants independent of all other terms,
        $V$ is defined in (\ref{eq: block convsum}),
        and $c =  2C_2 (\epsilon t^{-1}) \cond(V')^2 \sqrt{KL}$.
    Then in $\order(NTKL + TK^2L^3)$ time and the time for one NNLS solve, Algorithm \ref{alg: LECS} finds $\tilde H \in \reals^{T \times K}$ with bounded error, in the sense that there exists some permutation $P$ such that
    \begin{align}
    \label{eq: recovery-error}
        \min_{1 \leq j \leq K} \ \cos\left( h_j, \tilde h_\permj \right)
        \geq
        1 - \frac{\delta}{2},
    \end{align}
    where $h_j, \hat h_\permj$ are the $j$th and $\permj$th rows of $H$ and $\hat H$, respectively.
\end{lemma}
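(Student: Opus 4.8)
The plan is to chain the robustness guarantees of the four subroutines of Algorithm~\ref{alg: LECS}, carrying a running bound on the error introduced at each stage and verifying that hypotheses (\ref{eq: noise-bound-additive})--(\ref{eq: big-dist}) supply exactly the slack each subroutine needs. First, for the \textbf{locate} step I would invoke Lemma~\ref{lemma: orconspa} with the roles of $V$ and $G$ played by $VA$ and $A^{-1}G$: condition (A) guarantees $VA = X[:,\calC]$ and that $A^{-1}G$ has the required form $[I\ M]\Pi$, while the threshold inequality~(\ref{eq: noise-bound-additive}) is precisely the hypothesis $t+\epsilon < \|VA\|_{-1,col}$ that lets OrConSPA discard only genuinely low-signal columns. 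The $C_1$ branch of the minimum in~(\ref{eq: small-noise}) matches the tolerance~(\ref{eq: conic-spa-tolerance}) of that lemma (with $R=KL$ and $VD_V = V'$), so OrConSPA returns an index set $J$ whose normalized columns $\tilde V' = (\tilde X D_{\tilde X})[:,J]$ satisfy, by~(\ref{eq: conic-spa-scaled-estimate}), a per-column bound $\|\tilde V'[:,J(j)] - V'[:,P(j)]\|_2 \leq 2C_2(\epsilon/t)\cond(V')^2$ against $V' = VA D_{VA}$.

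Next, for the \textbf{estimate} step I would feed $\tilde V'$ and $\tilde X = X + E = V'G' + E$ to Corollary~\ref{cor: perturbation}. I would control the spectral error $\sigmax(F)$ of $F = V' - \tilde V'$ by $\sqrt{KL}$ times the per-column bound just obtained, giving $\sigmax(F) \leq c = 2C_2(\epsilon/t)\cond(V')^2\sqrt{KL}$; the $C_2^{-1}$ branch of~(\ref{eq: small-noise}) is exactly what forces $c < \sigmin(V')$ so that the corollary applies. Using $\|E[:,j]\|_2 \leq \|E[:,j]\|_1 \leq \epsilon$ from condition (D), the corollary then yields the row-wise bound $\max_i \|G'[i,:] - \tilde G'[i,:]\|_2 \leq \sqrt{T}\,(c\|G'\|_{2,col} + \epsilon)/(\sigmin(V') - c)$.

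The \textbf{cluster} and \textbf{sort} steps follow from Lemma~\ref{lemma: clustering} and Lemma~\ref{lemma: sorting}, whose common hypothesis is that the row error fall below $(\delta/70)\|G'\|_{-2,row}$. Inequality~(\ref{eq: big-dist}) is precisely the rearrangement of the requirement that the estimate-step bound be smaller than this threshold, so both lemmas apply (to $G'$, whose rows are positive-scalar multiples of shifts of the rows of $H$, which is harmless since the $L$-shift criteria are cosine-based and scale-invariant, and condition (B) supplies [$\delta$-sequentially unique]): clustering partitions the $KL$ rows of $\tilde G'$ into the $K$ correct shift-classes and sorting recovers the shift order $\pi_k$ within each class. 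For the final reconstruction of $\tilde H$, I would observe that each de-shifted, averaged row is a convex combination of de-shifted rows, each within $(\delta/70)\|G'\|_{-2,row}$ of $h_k$; since de-shifting is a coordinate-dropping partial isometry that never increases norm and averaging is a convex combination, the averaged error stays within the same 2-norm bound, whence Corollary~\ref{cor: angle-linear} delivers $\cos(h_k, \tilde h_{P(k)}) \geq 1 - \delta/2$, i.e.~(\ref{eq: recovery-error}).

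The hardest part will be the bookkeeping of the three nested normalizations---$V$ versus $VA$ versus $V' = VA D_{VA}$, and the matching $G$, $A^{-1}G$, $G'$---ensuring the constants $C_1, C_2$ and the factor $\sqrt{KL}$ propagate consistently so that the two branches of~(\ref{eq: small-noise}) align with the tolerance of Lemma~\ref{lemma: orconspa} and the applicability condition $c<\sigmin(V')$ of Corollary~\ref{cor: perturbation}, and that the constant $70$ in the clustering/sorting threshold matches the $35$ of Corollary~\ref{cor: angle-linear} after combining two perturbed vectors. A secondary subtlety is the boundary truncation $a = \min(L, T-j)$ in the averaging formula for $\tilde H$, which must be handled so that near the right edge of the matrix the de-shifted copies still align to $h_k$ without inflating the error.
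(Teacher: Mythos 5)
Your proposal is correct and follows essentially the same route as the paper's own proof: Lemma~\ref{lemma: orconspa} for the locate step (with the $C_1$ branch of~(\ref{eq: small-noise}) matching its tolerance), Corollary~\ref{cor: perturbation} for the estimate step (with the $C_2^{-1}$ branch forcing $c < \sigmin(V')$), and Lemmas~\ref{lemma: clustering} and~\ref{lemma: sorting} under the threshold supplied by~(\ref{eq: big-dist}), with the scale-invariance of the cosine handling the passage from $G$ to $G'$. In fact your treatment of the final de-shift-averaging step and of the constants $35$ versus $70$ is more explicit than the paper, which simply asserts that~(\ref{eq: recovery-error}) ``follows automatically.''
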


\textit{Proof}. We break the proof up into several steps.

\paragraph{Location}
We know that $V = X[:, \calC] A \Pi$ for some index set $\calC$, some diagonal scaling matrix $A$, and some permutation matrix $\Pi$.
Instead of identifying $V$ directly, we identify $V A = X[:, \calC] \Pi$, which satisfies $X = VA [I\ M] \Pi'$ for some $M \in \reals^{N \times T-R}$ and some permutation matrix $\Pi'$.
In particular, we apply OrConSPA to inputs $\tilde X, R, t$ to recover $V' = VA D_{VA}$.
Since $X = VA [I\ M] \Pi'$ and the noise is bounded in (\ref{eq: noise-bound-additive}) and (\ref{eq: small-noise}) by
    $\epsilon + t
    \leq
    \| VA \|_{-1}$ and
    $\frac{\epsilon}{t}
    \leq
    \frac{C_1 \sigmin(V')}{2 \sqrt{KL} \cond(V')^2 }$,
we can apply Lemma \ref{lemma: orconspa} and show that the index set $J$ output by OrConSPA satisfies
    \[ \max_{j}\
    \| (\tilde X D_{\tilde X})[:, J(j)] - V'[:, P(j)] \|_2^2
    \leq
    2 C_2 \frac{\epsilon}{t} \cond(V')^2. \]
Let $\tilde V = (\tilde X D_{\tilde X})[:, J(j)]$.

\paragraph{Perturbation}
Given $\tilde V$, we estimate $G' = D_{VA}^{-1} A^{-1} G$ using nonnegative least squares.
Let $F = \tilde V - V'$.
We know the largest singular value is bounded above by the Frobenius norm, so we have
    $\sigmax(F) \leq \sqrt{KL} \left( 2 C_2 (\epsilon t^{-1}) \cond(V')^2 \right) = c$.
Rearranging (\ref{eq: small-noise}) using the latter term of the minimum gives us the bound
    $\sigmin(V') > 2 C_2 (\epsilon t^{-1}) \cond(V')^2  \sqrt{KL} = c$.
This allows us to apply Corollary \ref{cor: perturbation}, so the recovered matrix $\tilde G$ must satisfy
    $\max_{i}\ \| G'[i,:] - \tilde G[i,:] \|
        \leq \sqrt{T} \left( \frac{\sigmax(F) \| G' \|_{2, col} + \epsilon }{\sigmin(V') - \sigmax(F)} \right)$.
We then have
\begin{align}
\label{eq: G-perturb-bound}
    \max_{1 \leq i \leq R}\ \| G'[i,:] - \tilde G[i,:] \|
        \leq
    \sqrt{T} \left( \frac{c \| G' \|_{2, col} + \epsilon }{\sigmin(V') - c} \right).
\end{align}

\paragraph{Clustering and Sorting}
Since $G' = D_{VA}^{-1} A^{-1} G$, the angle to a row of $G'$ is the same as an angle to a row of $G$.
Rearranging (\ref{eq: big-dist}) we have
\begin{align*}
    \frac{\delta \| G' \|_{-2, row}}{70} > \sqrt{T} \left( \frac{c \| G' \|_{2, col} + \epsilon }{\ssigmin(VA) - c} \right).
\end{align*}
Combining this with (\ref{eq: G-perturb-bound}) lets us apply Lemma \ref{lemma: clustering} and Lemma \ref{lemma: sorting} to achieve the correct clustering and sorting.
The bound (\ref{eq: recovery-error}) follows automatically.
\hfill $\qed$

\begin{proof}[Proof of Theorem \ref{thm: noisy-recovery-abridged}]
   All we must do is simplify (\ref{eq: small-noise}) and (\ref{eq: big-dist}) to match the bounds in Theorem \ref{thm: noisy-recovery-abridged}.
   For (\ref{eq: small-noise}), simply set
       $C_a = \frac{\min(C_1, C_2^{-1})}{2}$.
   For (\ref{eq: big-dist}), we begin by recalling our assumption that $\| VA \|_{2, col} > 1$ and that $t > 1$.
   We must have that $\| A^{-1} G \|_{-2, row} > 1$ since each row has an entry with a 1 in it by construction.
   Then since $\| VA \|_{2, col} > 1$ and equivalently $(D_{VA})^{-1}_{ii} > 1$, then $\| G' \|_{-2, row} > 1$.
   Given these assumptions, we have the sufficient condition
   \begin{align*}
       \delta >
       \frac{70 \sqrt{T} (c \| G \|_{2, col} + \epsilon)}{\sigmin(V') - c}.
   \end{align*}
   Dividing the numerator and denominator by $2 C_2 \cond(V')^2 \sqrt{KL}$ leads to
       $\delta >
       \frac{70 \sqrt{T} ( \| G' \|_{2, col} (\epsilon t^{-1} ) + \epsilon)}{\frac{\sigmin(V')}{2 C_2 \cond(V')^2 \sqrt{KL}} -  (\epsilon t^{-1}) }$.
   Then we have the sufficient condition
   \begin{align*}
       \delta >
       \frac{70 \sqrt{T} ( \| G' \|_{2, col} (\epsilon t^{-1}) + \epsilon) )}{ \rho },
   \end{align*}
   where $\rho = C_a \frac{\sigmin(V')}{ \cond(V')^2 \sqrt{KL}} -  (\epsilon t^{-1})$. Since we assumed $t > 1$,
       $\delta >
       \frac{140 \sqrt{T}  \| G' \|_{2, col} \epsilon }{ \rho }$
   is also sufficient. Setting $C_b^{-1} = 140 $ gives us $\delta > C_b^{-1} \sqrt{T} \|G'\|_{2, col} \epsilon / \rho$ which in turn reduces to $\epsilon < C_b \delta \rho / \| G' \|_{2, col} \sqrt{T}$.
\end{proof}


\section*{Acknowledgment}

The authors thank the reviewers for their feedback which helped improve the paper significantly.

\newpage

\bibliographystyle{spmpsci}
\bibliography{refs}

\end{document}